\newcommand{\newreptheorem}[1]{\newtheorem*{rep@#1}{\rep@title}\newenvironment{rep#1}[1]{\def\rep@title{\cref*{##1} Restated}\begin{rep@#1}}{\end{rep@#1}}}
\theoremstyle{plain}
\newtheorem{theorem}{Theorem}[section]
\newtheorem{lemma}[theorem]{Lemma}
\theoremstyle{definition}
\newtheorem{definition}[theorem]{Definition}
\theoremstyle{remark}
\newcommand{\E}{\mathbb{E}}
\newcommand{\p}{\mathbb{P}}
\DeclareMathOperator{\er}{Er}
\renewcommand{\Pr}{\p}
\DeclareMathOperator{\sign}{sign}
\newcommand{\lp}{\mathopen{}\left(}
\newcommand{\rp}{\right)\mathclose{}}
\newcommand{\lk}{\left[}
\newcommand{\rk}{\right]}
\newcommand{\ind}{\mathbb{1}}
\newcommand{\deq}{\overset{\text{def}}{=}}
\renewcommand{\epsilon}{\varepsilon}
\renewcommand{\phi}{\varphi}
\let\log\relax
\DeclareMathOperator{\log}{ln}
\newcommand{\alphacmd@factory}[1]{}
\newcounter{alphacmdcounter}
\newcommand{\GenerateAlphabetCmds}[2]{%
    \renewcommand{\alphacmd@factory}[1]{%
        \expandafter\providecommand\csname #1##1\endcsname{{#2{##1}}}%
    }
    \setcounter{alphacmdcounter}{0}
    \loop
        \stepcounter{alphacmdcounter}
        \edef\alphacmd@ID{\@Alph\c@alphacmdcounter}
        \expandafter\alphacmd@factory\alphacmd@ID
    \ifnum\thealphacmdcounter<26
    \repeat
}
\newcommand{\GenerateAlphabetCmdsLower}[2]{%
    \renewcommand{\alphacmd@factory}[1]{%
        \expandafter\providecommand\csname #1##1\endcsname{{#2{##1}}}%
    }
    \setcounter{alphacmdcounter}{0}
    \loop
        \stepcounter{alphacmdcounter}
        \edef\alphacmd@ID{\@alph\c@alphacmdcounter}
        \expandafter\alphacmd@factory\alphacmd@ID
    \ifnum\thealphacmdcounter<26
    \repeat
}
\date{}
\title{Improved Replicable Boosting with Majority-of-Majorities}
\author{Kasper Green Larsen \qquad Markus Engelund Mathiasen \qquad Clement Svendsen \\ \texttt{\{larsen, markusm, clementks\}@cs.au.dk}\\ \\ Aarhus University}
\begin{document}

\maketitle

\begin{abstract}
\noindent We introduce a new replicable boosting algorithm which significantly improves the sample complexity compared to previous algorithms. The algorithm works by doing two layers of majority voting, using an improved version of the replicable boosting algorithm introduced by \citet{impagliazzo2022reproducibility} in the bottom layer. 
\end{abstract}

\section{Introduction}\label{sec:introduction}
Replicability of an algorithm is a property introduced as a reaction to what is called the reproducibility crisis. Multiple Nature articles have pointed out the issue of researchers not being able to replicate findings~\citep{baker2016reproducibility, ball2023ai}.
As a supplement to implementing better research practices in order to ensure replicability, \citet{impagliazzo2022reproducibility} introduced the concept of replicability as a property of algorithms themselves. Informally, an algorithm is replicable if it, with high probability, outputs the same result when run with \textit{different} input data drawn from the same distribution.

\begin{definition}[Replicability~\citep{impagliazzo2022reproducibility}]
    Let $\cA$ be a randomized algorithm. Then, $\cA$ is said to be $\rho$-replicable if there is an $n\in \N$ such that for all distributions $\cD$ on some space $\cX$, it holds that
    \[ \Pr_{S_1,S_2,r}\left[\cA(S_1; r) = \cA(S_2; r)\right] \geq 1-\rho \]
    where $S_1,S_2 \sim \cD^n$ are independent, and $r$ denotes the internal randomness used by $\cA$.
\end{definition}
\raggedbottom
As is evident from the definition, we require the algorithm to use the same internal randomness $r$ in both runs. This turns out to be crucial - if we remove this requirement, we cannot solve simple tasks such as estimating the mean of a distribution replicably \citep{dixon2024list}. Researchers who use replicable algorithms may then publish the random seed used in their run of the algorithm which lets other researchers use the same seed to replicate the results with high probability, assuming that the data they use comes from the same underlying distribution.

In this work, we consider replicability in the weak-to-strong learning setting. Specifically we improve the best known sample complexity of $\rho$-replicable boosting algorithms. 
Let $\cX$ be an input domain, and let $f: \cX \rightarrow \{-1,1\}$ be the function we are trying to predict.
An algorithm $\cW$ is said to be a $\gamma$-weak learner for $\gamma \in (0,1/2)$ if there exists an $ m\in \mathbb{N}$ such that for any distribution $\cD$ on $\cX$ and any sequence of $m$ labelled samples $S = \{(x_i, f(x_i))\}_{i=1}^{m}$ drawn i.i.d.\ from $\cD$, it holds that $h\coloneqq \cW(S) : \cX \to \{-1,1\}$ satisfies $\Pr_{x\sim \cD}[h(x)\neq f(x)] \leq 1/2 - \gamma.$ We call $\gamma$ the \textit{advantage} of $\cW$ and $m$ the \textit{sample complexity} of $\cW$.
A strong learner on the other hand, is a learning algorithm such that for any distribution $\cD$ on $\cX$, failure probability $\delta>0$ and error $\epsilon>0$, there is an $m=m(\epsilon,\delta) \in \mathbb{N}$ such that when applied to an i.i.d.\ sample $S \sim \cD^m$, the algorithm outputs a classifier which has error at most $\epsilon$ over $\cD$ with probability at least $1-\delta$. We denote the error of a hypothesis $h: \cX \to \{-1,1\}$ with respect to a distribution $\cD$ by $\er_{\cD}(h) = \p_{x\sim\cD}[h(x) \ne f(x)]$.

Boosting algorithms were originally introduced to answer the following theoretical question posed by \citet{kearns1988learning,kearns1994cryptographic}: Is it possible to combine hypotheses produced by a weak learning algorithm into a strong learner? As shown by \citet{boost}, this turned out to be the case, and one of the most famous algorithms that solves this problem is the \textsc{AdaBoost} algorithm \citep{Adaboost}. In short, boosting works by running a number of iterations. In each iteration $t$, we update a distribution $\cD_t$ on the samples and run the weak learner with this new distribution. After a sufficient number of iterations, we take a weighed majority vote among all the produced weak hypotheses.

\subsection{Our contribution}
Our main contribution is a replicable boosting algorithm called \textsc{rMetaBoost} which is inspired by an existing replicable boosting algorithm, \textsc{rBoost} \citep{impagliazzo2022reproducibility} and the \textsc{SmoothBoost} algorithm \citep{Smoothboost}. First, fix a distribution $\cD$ on $\cX$ and let $\cW$ denote a replicable weak learner and for $\rho\in (0,1)$ let $m_{\cW(\rho)}$ be the sample complexity of $\cW$ when run with replicability parameter $\rho$. Our main result is the following:
\begin{theorem}[\textsc{rMetaBoost}]\label{thm:rMetaBoost}
For any $\rho, \epsilon \in (0,1)$ and $\widetilde{\Theta}(\rho\gamma^2)$-replicable weak learner $\cW$ with advantage $\gamma$, \textsc{rMetaBoost} is $\rho$-replicable, makes $O(\frac{\log(1/\epsilon)}{\gamma^2})$ calls to $\cW$, and with probability at least $1-\rho$ outputs a hypothesis $H$ with $\er_\cD(H) \leq \epsilon$. Furthermore, its sample complexity is
\[\widetilde{O}\lp \frac{m_{\cW(\widetilde{\Theta}(\rho \gamma^2))}}{\epsilon\gamma^2} + \frac{1}{\rho^2\epsilon\gamma^3}\rp.\]
\end{theorem}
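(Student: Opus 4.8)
The plan is to split the analysis into three parts: correctness and round count of the bottom‑layer booster, replicability of the whole pipeline, and the sample‑complexity accounting. For the bottom layer I would run a \textsc{SmoothBoost}‑style booster \citep{Smoothboost} that keeps the reweighting distribution $\cD_t$ (on a fixed sample) $O(1/\epsilon_0)$‑smooth with respect to $\cD$, i.e.\ with density ratio against $\cD$ bounded by $O(1/\epsilon_0)$, where $\epsilon_0$ is the target error of that layer. In each round I draw a sample from $\cD_t$ by rejection sampling from $\cD$, feed it to $\cW$ to obtain $h_t$, and estimate the one real‑valued quantity the update needs, the correlation $\E_{\cD_t}[h_t f]$, using the replicable mean‑estimation primitive of \citet{impagliazzo2022reproducibility} (randomized threshold rounding against a shared random offset). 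Smoothness is exactly what makes these estimates cheap: a density ratio of $O(1/\epsilon_0)$ lets $\widetilde O(1/(\tau^2\epsilon_0))$ draws from $\cD$ estimate such a correlation to accuracy $\tau$. I then verify that the \textsc{SmoothBoost} potential/margin argument is robust to replacing exact correlations by $\Theta(\gamma)$‑accurate ones — the effective advantage stays $\Omega(\gamma)$, so the round count is still $O(\log(1/\epsilon_0)/\gamma^2)$ and the terminal margin bound still yields error $\le\epsilon_0$. The ``majority of majorities'' is then the composition I would use: run this booster with $\epsilon_0$ a fixed constant ($O(1/\gamma^2)$ rounds, output a majority vote of weak hypotheses), and run a second replicable boosting layer that treats the first as a constant‑advantage weak learner and drives the error down to $\epsilon$ in $O(\log(1/\epsilon))$ rounds, whose output is a majority of majorities; the total number of calls to $\cW$ is $O(1/\gamma^2)\cdot O(\log(1/\epsilon)) = O(\log(1/\epsilon)/\gamma^2)$.

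\emph{Replicability.} Fix two runs on independent samples $S_1,S_2$ with shared randomness $r$, and argue by induction on the round that, conditioned on agreement so far, the two runs agree in round $t$ except with probability $\rho'$: the call to $\cW$ agrees because $\cW$ is $\widetilde\Theta(\rho\gamma^2)$‑replicable, and the threshold‑rounded correlation estimate agrees because the two empirical correlations are within the rounding precision of each other with high probability — concentration again uses smoothness — and the offset is shared. Hence the full transcript (weak hypotheses, rounded correlations, and therefore weights and the final vote) coincides across the two runs except with probability $\sum_t\rho'$; since the inner and outer round counts total $\widetilde O(1/\gamma^2)$, taking $\rho' = \widetilde\Theta(\rho\gamma^2)$ makes this at most $\rho$, which is precisely the replicability budget the theorem imposes on $\cW$. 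The same reasoning applied to the outermost majority shows it preserves replicability: each pipeline instance outputs, with probability bounded below by a constant, a single $\cD$‑determined hypothesis $H^\star$, and once more than half of the aggregated instances equal $H^\star$ the vote equals $H^\star$ pointwise, so the aggregate is replicable and the bound $\er_\cD(H^\star)\le\epsilon$ passes through — one must only keep the number of aggregated instances $O(1)$ so the call count is not inflated. The guarantee ``$\er_\cD(H)\le\epsilon$ with probability $\ge 1-\rho$'' is then obtained by running all primitives at confidence $1-\rho/\mathrm{poly}$.

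\emph{Sample complexity.} There are two terms. First, the weak‑learner samples: $O(\log(1/\epsilon)/\gamma^2)$ calls, each on $m_{\cW(\widetilde\Theta(\rho\gamma^2))}$ points drawn from an $O(1/\epsilon)$‑smooth distribution, which by rejection sampling costs $O(1/\epsilon)$ draws from $\cD$ per point; this contributes $\widetilde O\big(m_{\cW(\widetilde\Theta(\rho\gamma^2))}/(\epsilon\gamma^2)\big)$. Second, the samples for the replicable correlation estimates: per round this is $\widetilde O(1/(\tau^2(\rho')^2))$ points from the relevant smooth distribution with $\tau=\Theta(\gamma)$ and $\rho'=\widetilde\Theta(\rho\gamma^2)$; these can be reused across the rounds of a layer, and after the inner/outer split — the inner layer only needs constant smoothness and the outer layer only needs constant accuracy — the rejection‑sampling overhead together with the round counts bring the total to $\widetilde O(1/(\rho^2\epsilon\gamma^3))$. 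Summing the two contributions and absorbing logarithmic factors gives the stated bound.

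\emph{Main obstacle.} I expect the hard part to be the replicability analysis of the adaptive, multi‑round process: a single round in which the two executions diverge breaks the coupling, forcing a union bound over all $\widetilde O(1/\gamma^2)$ rounds and hence a per‑round budget of only $\widetilde\Theta(\rho\gamma^2)$; one then has to check that this tiny budget simultaneously suffices to couple the weak‑learner calls, to couple the threshold‑rounded correlation estimates at accuracy $\Theta(\gamma)$, and to preserve the smoothness invariant under the rounded weight updates so that the \textsc{SmoothBoost} potential argument still closes. The other delicate point is getting the second sample‑complexity term to scale as $\gamma^{-3}$ rather than the larger power a naive single‑layer analysis would give — this is exactly what the two‑layer majority‑of‑majorities structure is there to buy — while verifying that the split costs nothing in the number of calls to $\cW$.
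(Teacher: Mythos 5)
Your high-level architecture matches the paper: an inner \textsc{SmoothBoost}-style replicable booster run at a fixed constant error $\epsilon_0$, wrapped by an outer exponential-weights booster that runs $O(\log(1/\epsilon))$ rounds treating the inner booster as a constant-advantage weak learner. That part is right, and so is the observation that the per-round replicability budget has to be $\widetilde\Theta(\rho\gamma^2)$ after a union bound. But there are two concrete gaps, one of which is where the real technical content of the paper lives.

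\textbf{The outer layer's smoothness is asserted, not earned.} Your sample-complexity accounting charges rejection sampling an $O(1/\epsilon)$ overhead per draw, which requires the outer reweighting $\mu_t$ to have density $d(\mu_t) = \Omega(\epsilon)$ for all $T = O(\log(1/\epsilon))$ rounds. Plain AdaBoost-style exponential updates with a constant learning rate do not give this: a point misclassified by $\Omega(T)$ of the inner hypotheses would have relative weight $e^{\Omega(T)} = 1/\mathrm{poly}(\epsilon)$, and after the algorithm renormalizes $\mu_t$ into $[0,1]$ (subtracting the running maximum $c_t$), the density $\E[\mu_t]$ can shrink like $\mathrm{poly}(\epsilon)$, blowing up the rejection-sampling cost. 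The paper's \textsc{rMetaBoost} introduces a capping mechanism precisely to fix this: it maintains a cap $c_t$ and tracks $M_{t+1}(x) = \min(N_{t+1}(x), c_{t+1})$ with $\mu_{t+1}(x) = e^{M_{t+1}(x)-c_{t+1}}$, and it only increments $c_{t+1}$ when a \textsc{rThreshold} call certifies that the mass at level $c_t+1$ is $\Omega(\epsilon)$. An induction then gives $d(\mu_t) \geq \epsilon/32$. This also makes correctness nontrivial: beyond the usual potential argument bounding $\Pr[M_{T+1}(X) \geq T/4]$, one must separately bound the mass of points that are misclassified by the final vote yet have small $M_{T+1}$ because their counts were capped $\Omega(T)$ times; the paper closes this with a charging argument against the $O(\epsilon)$ mass that triggers each cap. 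None of this appears in your plan, and without a mechanism of this kind the outer layer does not satisfy the smoothness and error bounds you invoke.

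\textbf{The $\gamma^{-3}$ in the second term is not derived.} Your per-round estimate $\widetilde O\bigl(1/(\tau^2(\rho')^2)\bigr)$ with $\tau = \Theta(\gamma)$ and $\rho' = \widetilde\Theta(\rho\gamma^2)$ is $\widetilde O(1/(\rho^2\gamma^6))$; even with reuse and the $1/\epsilon$ overhead you don't show how this collapses to $\widetilde O(1/(\rho^2\epsilon\gamma^3))$. The paper gets the $\gamma^{-3}$ via a second modification to the inner booster: the density threshold check runs only every $\lfloor 1/\gamma \rfloor$ iterations (justified by a lemma that $d(\mu_{t+k}) \geq d(\mu_t)/2$ for $k \leq 1/\gamma$), which cuts the number of checks to $O(1/\gamma)$ and relaxes each check's replicability budget to $\widetilde\Theta(\rho\gamma)$, multiplying out to $\gamma^{-3}$ rather than $\gamma^{-6}$. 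Relatedly, the quantity being estimated in the inner layer is the density $d(\mu_t)$ (for the termination test), not the correlation $\E_{\cD_t}[h_t f]$; the weight update uses the fixed advantage $\gamma$ and needs no replicable estimate. Finally, the extra outermost "majority over pipeline instances" you describe is not in the paper and is unnecessary once the per-round budgets are set correctly.
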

Our algorithm significantly improves on the sample complexity of \citet{impagliazzo2022reproducibility} which is \medbreak \noindent $\widetilde{O}\lp\frac{m_{\cW(\Theta(\rho\epsilon\gamma^2))}}{\epsilon^2\gamma^2}+ \frac{1}{\rho^2\epsilon^5\gamma^6}\rp$. 
Note that this sample complexity is not what is stated in their paper, but is in fact the correct sample complexity of their algorithm.\footnote{We have personally contacted the authors to make them aware, and they have acknowledged this error.}
We improve the first term by a factor $1/\epsilon$ and also remove a factor $\epsilon$ in the replicability parameter to the weak learner. Since the sample complexity of most replicable algorithms has a quadratic dependence on their replicability parameter, this will amount to an extra $1/\epsilon^2$ improvement in this term. In the second term we shave off a factor $1/(\epsilon^4\gamma^3)$. All improvements are up to logarithmic factors.

As a secondary contribution, we introduce an algorithm \textsc{rThreshold} for performing a replicable \textit{threshold check}. This algorithm replicably checks if the expected value of a function $\phi$ is above a certain threshold $z$, and is used as a subroutine in \textsc{rMetaBoost}. We state the guarantees of the algorithm below. 

\newcommand{\rThreshold}{
    Let $z,\rho \in (0, 1)$, $\delta \in (0,\rho/8]$, let $\phi: \cX \rightarrow [0,1]$ and let $S=(x_1, \ldots, x_m)$ be samples drawn i.i.d.\ from distribution $\cD$. Then there exists a constant $c$ such that if
    \(m \geq c \frac{\log(1/\delta)}{\rho^2z}\),
    \textsc{rThreshold}$(S, z,\phi)$ is $\rho$-replicable and returns a bit $b$ such that with probability at least $1 - \delta$:
    \begin{itemize}[noitemsep,topsep=0pt]
        \item If $\E_{x\sim \cD}[\phi(x)] \leq z/2$, then $b = 0$.
        \item If $\E_{x\sim \cD}[\phi(x)] \geq 2z$, then $b = 1$.
    \end{itemize}
}
\begin{lemma}[\textsc{rThreshold}]\label{lem:rThreshold}
\rThreshold
\end{lemma}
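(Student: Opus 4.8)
The plan is to realize \textsc{rThreshold} as a randomized-rounding threshold test, exploiting that a $[0,1]$-valued random variable has variance at most its mean. The algorithm computes the empirical mean $\hat\mu \deq \frac{1}{m}\sum_{i=1}^{m}\phi(x_i)$, draws a uniformly random threshold $v \in [z,\tfrac{3}{2}z]$ (this draw is the algorithm's entire internal randomness, hence shared between the two runs in the replicability definition), and outputs the bit $\ind[\hat\mu \geq v]$. The interval $[z,\tfrac{3}{2}z]$ lies a constant multiplicative factor above $z/2$ and below $2z$ (namely $z = 2\cdot\tfrac{z}{2}$ and $\tfrac32 z = \tfrac34\cdot 2z$) and has width $w = z/2$; if $\tfrac32 z > 1$ this is harmless, since $\hat\mu\le 1$ and a threshold $v>1$ simply yields output $0$.

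\emph{Correctness.} As $\phi(x_1),\dots,\phi(x_m)\in[0,1]$ are i.i.d.\ with mean $\mu \deq \E_{x\sim\cD}[\phi(x)]$, the multiplicative Chernoff bound applies to $\hat\mu$. If $\mu \le z/2$, then $z/\mu \ge 2$, so $\Pr[\hat\mu \ge z] \le e^{-\Omega(mz)} \le \delta$ once $m = \Omega(\log(1/\delta)/z)$; off this sample-only bad event $\hat\mu < z \le v$, giving $b = 0$. If $\mu \ge 2z$, then $\tfrac32 z \le \tfrac34\mu$, so $\Pr[\hat\mu < \tfrac32 z] \le \Pr[\hat\mu \le (1-\tfrac14)\mu] \le e^{-\mu m/32} \le \delta$ once $m = \Omega(\log(1/\delta)/z)$, and then $\hat\mu \ge \tfrac32 z \ge v$, giving $b = 1$. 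A union bound over these two events yields the $1-\delta$ guarantee.

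\emph{Replicability.} Run on independent $S_1,S_2$ with a shared $v$, the two outputs disagree exactly when $v$ lies strictly between $\hat\mu_1$ and $\hat\mu_2$; conditioned on $\hat\mu_1,\hat\mu_2$ this has probability at most $\min\{1,\,|\hat\mu_1-\hat\mu_2|/w\}$, and it is $0$ unless the interval spanned by $\hat\mu_1,\hat\mu_2$ meets $[z,\tfrac32 z]$. We split on the location of $\mu$. If $\mu \in [z/3,\,3z]$, then $\operatorname{Var}(\hat\mu) \le \mu(1-\mu)/m \le 3z/m$, so by Cauchy--Schwarz $\E|\hat\mu_1-\hat\mu_2| \le 2\E|\hat\mu_1-\mu| \le 2\sqrt{3z/m}$, whence $\Pr[b_1\neq b_2] \le \E|\hat\mu_1-\hat\mu_2|/w = O(1/\sqrt{mz}) \le \rho/2$ once $m = \Omega(1/(\rho^2 z))$. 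If $\mu \notin [z/3,\,3z]$, a disagreement forces one of $\hat\mu_1,\hat\mu_2$ to deviate from $\mu$ by a constant multiplicative factor (either $\hat\mu_i \ge z \ge 3\mu$, or $\hat\mu_i \le \tfrac32 z \le \mu/2$), which by Chernoff has probability at most $\delta$ per run; hence $\Pr[b_1\neq b_2] \le 2\delta \le \rho/4$ using $\delta \le \rho/8$. Either way $\Pr[b_1\neq b_2] \le \rho$.

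\emph{Sample complexity and the crux.} Every constraint above is of the form $m = \Omega(\log(1/\delta)/z)$ or $m = \Omega(1/(\rho^2 z))$, and since $\rho\le1$ and $\delta\le\rho/8$ both follow from $m \ge c\,\log(1/\delta)/(\rho^2 z)$ for a suitable absolute constant $c$. The step I expect to require the most care is the replicability bound when $\mu$ is near the threshold interval: one must bound $|\hat\mu_1-\hat\mu_2|$ via the variance inequality $\operatorname{Var}(\phi)\le\E[\phi]$ rather than via Hoeffding, which would cost an extra $1/z$ factor of samples, and then argue separately that when $\mu$ is far from the interval a disagreement is essentially impossible, so the worst-case variance $\tfrac14$ never actually enters. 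Fixing the constants in the threshold interval and the Chernoff exponents so they line up with the hypothesis $\delta\le\rho/8$ is the remaining routine bookkeeping.
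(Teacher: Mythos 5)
Your proposal is correct, and the overall approach matches the paper's: compare the empirical mean $\hat\mu$ against a shared uniformly random threshold in an interval of width $\Theta(z)$ around $z$, prove correctness via multiplicative Chernoff, and prove replicability by splitting on whether $\mu=\E[\phi]$ is near the threshold interval (bound the gap $|\hat\mu_1-\hat\mu_2|$ in the near case, reuse the correctness bounds in the far case). The one genuine technical difference is how you control the gap in the near case: the paper applies a multiplicative Chernoff bound to get a high-probability additive bound $|\hat\mu_i-\mu| \lesssim z\rho$ and then unions over the two runs, whereas you use $\operatorname{Var}(\phi)\le\E[\phi]\le O(z)$ together with Cauchy--Schwarz to bound $\E|\hat\mu_1-\hat\mu_2|\le O(\sqrt{z/m})$ and then average the conditional disagreement probability over the random threshold. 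Your in-expectation route is slightly cleaner and drops the $\log(1/\delta)$ factor from this particular step (though correctness still needs it, so the headline sample bound is unchanged). One small inaccuracy in your ``crux'' remark: it is Hoeffding, not Chernoff, that would cost an extra $1/z$; multiplicative Chernoff captures the same small-mean/variance advantage, which is exactly what the paper uses. Your choice of threshold interval $[z,\tfrac32 z]$ versus the paper's $[\tfrac34 z,\tfrac32 z]$ is cosmetic, and your wider ``near'' regime $[z/3,3z]$ versus the paper's $[z/2,2z]$ is likewise immaterial since the variance bound degrades only by a constant.
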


We believe this algorithm is also of independent interest and can be applied in many scenarios as an alternative to statistical queries which were previously used for such applications. This is because our algorithm achieves a dependence of $1/z$ in the sample complexity, while using statistical queries for the same purpose comes with a factor $1/z^2$ in the sample complexity (Thm. 2.3, ~\citet{impagliazzo2022reproducibility}). While our approach to threshold checks is not neccesarily novel, it seems to have been overlooked in the context of replicable algorithms.

\subsection{Related work}
In recent years, replicable algorithms have been developed in a variety of settings. This includes e.g.\ learning half spaces, clustering, reinforcement learning and online learning \citep{Half_spaces, clustering, RL, Online}.

There are also important connections to the field of differential privacy. Intuitively, a replicable algorithm does not depend heavily on the specific sample given to the algorithm. This is similar to the requirement in differential privacy where we demand that when the algorithm is run on two samples differing in only a single point, then the two distributions on the outputs are close in the sense of max divergence.
\citet[Thm.~3.1]{DP} show that there is a reduction "without substantial blowup in runtime or sample complexity" from differential privacy to replicability. On the other hand, they also show that no computationally efficient transformation of differentially private algorithms to replicable ones can exist under standard cryptographic assumptions. However, if one does not care about computational efficiency, they do give a reduction from differential privacy to replicability with only a quadratic blowup in sample complexity. This means it would be possible to take an existing differentially private boosting algorithm and make it replicable. One example of a differentially private boosting algorithm is \textsc{BoostingForPeople} \citep{dwork2010boosting}. However, using the reduction on this algorithm would incur a $1/\gamma^8$ and $1/\epsilon^2$ dependence in the sample complexity.

Moving away from differential privacy, another candidate algorithm to be made replicable is the \textsc{SmoothBoost} algorithm \citep{Smoothboost}. This algorithm differs from e.g.\ the well-known \textsc{AdaBoost} \citep{Adaboost} in that it maintains a \textit{smoothness} across the distributions $\cD_t$ over the data in every iteration $t$.
Formally, this means that the distribution $\cD_t$ satisfies $\max_{x} \cD_t(x)\le 1/(\epsilon m)$ for some $\epsilon>0$ where $m$ is the number of samples. This smoothness property ensures that no single example has too much influence on the distributions which is why smoothness is a desirable property when designing replicable boosting algorithms. In fact, the boosting algorithm by \citet{impagliazzo2022reproducibility} can be seen as a translation of \textsc{SmoothBoost} into the replicable setting.

The downside of using \textsc{SmoothBoost} is that it requires $O(\frac{1}{\epsilon\gamma^2})$ invocations of the weak learner $\cW$. We call this the \textit{round complexity} of the algorithm. This should be compared to \textsc{AdaBoost} which has round complexity $O(\frac{\log(1/\epsilon)}{\gamma^2})$. In the replicable setting, we draw new samples for each invocation of $\cW$, so the round complexity directly affects the number of samples used. This motivates looking at smooth boosting algorithms with fewer invocations of $\cW$ such as the one presented by \citet{Hardcore}. This algorithm uses Bregman projections to maintain the smoothness property, and it matches the round complexity of \textsc{AdaBoost}. However, converting the algorithm to the replicable setting would require us to make replicable approximations of these Bregman projections which turns out to use more samples than we obtain in \cref{thm:rMetaBoost}.

\subsection{High-Level Ideas}
We will now explain the very high-level idea behind our new boosting algorithm \textsc{rMetaBoost}.
The first step towards constructing this improved replicable boosting algorithm is to make slight modifications to the algorithm \textsc{rBoost} of \citet{impagliazzo2022reproducibility} to improve its sample complexity. We will refer to this modified version as \textsc{rBoost}$^*$ which can be found in \cref{alg:rBoost}.
Remark that the functions $g_t, \mu_t$ are functions over the entire domain $\cX$ and not just the samples that we see. This means that we cannot afford to update these functions explicitly for every point, so instead we update the description of the functions. To distinguish this from normal assignments in the pseudocode, we use the $\deq$ operator for assignments to these functions and the $\gets$ operator for normal assignments.

In this algorithm $\mu_t: \cX \rightarrow [0,1]$ is a function which determines the reweighing of the data distribution $\cD$ in iteration $t$. The reweighed distribution is then $\cD_{\mu_t}(x) = \mu_t(x)\cD(x)/d(\mu_t)$ where $d(\mu_t) = \E_{x\sim \cD}[\mu_t(x)]$ is the normalization factor which we call the \textit{density} of $\mu$. The subroutine \textsc{RejectionSampler} then lets us sample from the distribution $\cD_{\mu_t}$ when given access to $\mu_t$ and samples from $\cD$ (see \cref{lem:rejection} for formal guarantee). We also note without proof that large density of $\mu_t$ actually implies smoothness of the reweighed distribution $\cD_{\mu_t}$ with respect to the original distribution $\cD$. More precisely, if $d(\mu_t) \geq \epsilon$, for some $\epsilon>0$, then $\cD_{\mu_t}(x) \leq \cD(x)/\epsilon$ for all $x\in\cX$. These samples from $\cD_{\mu_t}$ are then given to the replicable weak learner. We will not go into further detail with how or why the original \textsc{rBoost} works but instead refer to \citet{impagliazzo2022reproducibility,Smoothboost}.

In total, we have made two modifications in \textsc{rBoost}$^*$.
The first modification is that we have changed the termination condition in line \ref{alg:rBoost:threshold} to use our \textsc{rThreshold} algorithm instead of the statistical query algorithm they used. This accomplishes exactly the same thing, but uses a factor $1/\epsilon$ fewer samples for each call.
The second modification is the introduction of the if-statement in line \ref{alg:rBoost:divides}. It turns out that this check only makes the algorithm run for a constant factor more iterations. However, this allows us to shave off a factor $1/\gamma$ in the number of calls to \textsc{rThreshold}. Since the replicability parameter of \textsc{rThreshold} needs to be $\rho$ divided by the number of calls to \textsc{rThreshold}, this is a great improvement. This is because the sample complexity of \textsc{rThreshold} is inversely proportional to the square of its replicability parameter, so it will need a factor $1/\gamma^2$ fewer samples for each invocation of \textsc{rThreshold}. Since it is now only called every $1/\gamma$ iteration, we shave off a factor $1/\gamma^3$ in total by introducing this check. In \cref{sec:subroutines}, we will explain in more detail why these modifications preserve correctness, but for now we will just state the guarantees of \textsc{rBoost}$^*$.

\begin{algorithm}[H]
    \caption{\textsc{rBoost}$^*_{\rho,\epsilon}(S, \cW)$}\label{alg:rBoost}
    \begin{algorithmic}[1]
        \REQUIRE Samples $S$ i.i.d.\ from $\cD$, replicable $\gamma$-weak learner $\cW$, replicability $\rho$, error $\epsilon$.
        \ENSURE Hypothesis $H : \cX \to \{-1,1\}$.
        \STATE $g_0(x) \deq 0$
        \STATE $\mu_1(x) \deq 1$
        \STATE $t \gets 0$
        \WHILE{\TRUE}
            \STATE $t \gets t+1$
            \STATE $\cD_{\mu_t}(x) \deq \mu_t(x)\cD(x)/d(\mu_t)$
            \STATE $S_1 \gets \widetilde{O}(m_{\cW(\Theta(\rho\epsilon\gamma^2))}/\epsilon)$ fresh samples from $S$
            \STATE $S_{\cW} \gets \textsc{RejectionSampler}(S_1, m_{\cW(\Theta(\rho\epsilon\gamma^2))}, \mu_t)$
            \STATE $h_t \gets$ Run $\cW(S_\cW)$ with replicability $\Theta(\rho\epsilon\gamma^2)$
            \STATE $g_t(x) \deq g_{t-1}(x) + h_t(x)f(x) - \gamma/(2+\gamma)$
            \STATE $\mu_{t+1}(x) \deq \begin{cases}
                1, & \text{if } g_t(x) \leq 0\\
                (1-\gamma)^{g_t(x)/2}, & \text{if } g_t(x) > 0
            \end{cases}$
            \IF{$\lfloor\frac{1}{\gamma}\rfloor$ divides $t$}\label{alg:rBoost:divides}
                \STATE $S_2 \gets \widetilde{O}\lp \frac{1}{\rho^2\epsilon^3\gamma^2}\rp$ fresh samples from $S$
                \IF{$\textsc{rThreshold}(S_2, \epsilon/2, \mu_t) = 0$}\label{alg:rBoost:threshold}
                    \STATE Exit while loop 
                \ENDIF
            \ENDIF
        \ENDWHILE
        \STATE \textbf{Return:} $H \gets \sign(\sum_t h_t)$
    \end{algorithmic}
\end{algorithm}

\newcommand{\rBoost}{
    For any $\rho,\epsilon\in(0,1)$ and $\Theta(\rho\epsilon\gamma^2)$-replicable weak learner $\cW$ with advantage $\gamma$, \textsc{rBoost}$^*$ is $\rho$-replicable, makes $O(\frac{1}{\epsilon\gamma^2})$ calls to $\cW$, and with probability at least $1-\rho$ outputs a hypothesis $H$ with $\er_\cD(H) \leq \epsilon$. Furthermore, its sample complexity is 
    \begin{align*}
        m_{\textsc{rBoost}^*}(\rho,\epsilon) & = O\lp \frac{\log(\frac{1}{\rho\epsilon\gamma^2})m_{\cW(\Theta(\rho\epsilon\gamma^2))}}{\epsilon^2\gamma^2} + \frac{\log(\frac{1}{\rho\epsilon\gamma})}{\rho^2\epsilon^4\gamma^3}\rp\\
        &=\widetilde{O}\lp\frac{m_{\cW(\Theta(\rho\epsilon\gamma^2))}}{\epsilon^2\gamma^2} + \frac{1}{\rho^2\epsilon^4\gamma^3}\rp.
    \end{align*}
}
\begin{theorem}[\textsc{rBoost}$^*$]\label{thm:rBoost}
    \rBoost
\end{theorem}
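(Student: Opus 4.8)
The plan is to treat \textsc{rBoost}$^*$ as an instance of \textsc{SmoothBoost} (equivalently of \textsc{rBoost}~\citep{impagliazzo2022reproducibility,Smoothboost}), whose idealized behaviour -- margin/error bound, round bound, and ``large density $\Rightarrow$ smoothness'' -- is known, and then to quantify the two places where the implementation departs from the ideal: the termination test is run only every $\lfloor 1/\gamma\rfloor$ rounds, and it is run by the two-sided randomized \textsc{rThreshold} rather than an exact density evaluation. Define the \emph{good event} $G$ to be the intersection, over all rounds $t$, of the events that (i) $\cW$ returns $h_t$ with $\E_{\cD_{\mu_t}}[h_t f]\ge 2\gamma$, (ii) \textsc{RejectionSampler} returns the required number of i.i.d.\ samples from $\cD_{\mu_t}$ (\cref{lem:rejection}), and (iii) the \textsc{rThreshold} call is correct in the sense of \cref{lem:rThreshold}. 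Setting the failure probabilities of these subroutines to $\Theta(\rho\epsilon\gamma^2)$ -- which enters their sample complexities only logarithmically -- a union bound over the $O(1/(\epsilon\gamma^2))$ rounds gives $\Pr[G]\ge 1-\rho$. On $G$ the execution is a legitimate run of \textsc{SmoothBoost} against a $\gamma$-advantage learner, so the usual margin bound applies: if $H=\sign(\sum_{i\le T}h_i)$ errs on $x$ then $\sum_{i\le T}h_i(x)f(x)\le 0$, hence (using $|h_T(x)f(x)|\le 1$) $g_{T-1}(x)\le 1$ and so $\mu_T(x)\ge(1-\gamma)^{1/2}=\Omega(1)$; therefore $\er_\cD(H)=O(d(\mu_T))$. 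The exit condition certifies, via \cref{lem:rThreshold} with $\phi=\mu_T$ and $z=\Theta(\epsilon)$, that $d(\mu_T)=O(\epsilon)$, so after fixing constants $\er_\cD(H)\le\epsilon$.

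\textbf{Bounding the number of rounds.} Still on $G$, \textsc{SmoothBoost}'s termination analysis supplies a round $s^*=O(1/(\epsilon\gamma^2))$ at which the density first drops below a suitable constant multiple of $\epsilon$. To see that \textsc{rBoost}$^*$ stops shortly after $s^*$, I would prove the pointwise bound $\mu_{t+1}(x)\le\mu_t(x)/(1-\gamma)$ -- a short case analysis over the four sign patterns of $(g_{t-1}(x),g_t(x))$, using $|g_t(x)-g_{t-1}(x)|\le 1+\gamma/(2+\gamma)$ and $\gamma/(2+\gamma)\le 1$ -- which iterates to $d(\mu_{s^*+k})\le(1-\gamma)^{-k}d(\mu_{s^*})\le e^{O(1)}d(\mu_{s^*})$ for every $k\le\lceil 1/\gamma\rceil$. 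Hence the density is still $O(\epsilon)$-small at the first checkpoint after $s^*$, which is at most $\lceil 1/\gamma\rceil$ rounds later, so \textsc{rThreshold} returns $0$ there and the loop exits; the total number of rounds is $s^*+O(1/\gamma)=O(1/(\epsilon\gamma^2))$. Consequently \textsc{rBoost}$^*$ makes $O(1/(\epsilon\gamma^2))$ calls to $\cW$ and $O(1/(\epsilon\gamma))$ calls to \textsc{rThreshold}. (One may also impose a hard cap of $\Theta(1/(\epsilon\gamma^2))$ rounds so that the round count is deterministic even off $G$; on $G$ the cap is never reached.)

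\textbf{Replicability.} Run \textsc{rBoost}$^*$ on two independent datasets from $\cD$ with the same internal randomness $r$ and induct on the rounds. If the hypotheses produced so far agree in the two runs, then $g_{t-1}$ and $\mu_t$ agree, so in round $t$ each \textsc{RejectionSampler} call -- which itself need not be replicable -- feeds $\cW$ a fresh i.i.d.\ sample from the same distribution $\cD_{\mu_t}$ (except with probability $O(\rho\epsilon\gamma^2)$, by \cref{lem:rejection}), and $\cW$, run with the shared randomness at replicability $\Theta(\rho\epsilon\gamma^2)$, outputs the same $h_t$ except with probability $O(\rho\epsilon\gamma^2)$; at a checkpoint, the two \textsc{rThreshold} calls share randomness and see the same $\mu_t$, so by the replicability guarantee of \cref{lem:rThreshold} at parameter $\Theta(\rho\epsilon\gamma)$ they return the same bit except with probability $O(\rho\epsilon\gamma)$, hence make the same exit decision. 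A union bound over the $O(1/(\epsilon\gamma^2))$ rounds and $O(1/(\epsilon\gamma))$ checkpoints bounds the probability that the two runs ever diverge by $\rho$, and when they never diverge they output the same $H$.

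\textbf{Sample complexity, and the crux.} Each of the $O(1/(\epsilon\gamma^2))$ rounds draws $|S_1|=\widetilde{O}(m_{\cW(\Theta(\rho\epsilon\gamma^2))}/\epsilon)$ fresh samples for the \textsc{RejectionSampler}/$\cW$ step, and each of the $O(1/(\epsilon\gamma))$ checkpoints draws, by \cref{lem:rThreshold} with threshold $z=\Theta(\epsilon)$, replicability $\Theta(\rho\epsilon\gamma)$ and failure probability $\Theta(\rho\epsilon\gamma)$, $|S_2|=\widetilde{O}(1/(\rho^2\epsilon^3\gamma^2))$ samples; summing yields $\widetilde{O}\big(m_{\cW(\Theta(\rho\epsilon\gamma^2))}/(\epsilon^2\gamma^2)+1/(\rho^2\epsilon^4\gamma^3)\big)$, matching the statement. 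The step I expect to be the real obstacle is the round bound: one must show that checking only every $\lfloor 1/\gamma\rfloor$ rounds, with the two-sided slack of \cref{lem:rThreshold} (it only promises $b=0$ when $\E[\phi]\le z/2$ and $b=1$ when $\E[\phi]\ge 2z$), still forces termination within $O(1/(\epsilon\gamma^2))$ rounds and still leaves error $\epsilon$; this hinges on the multiplicative stability $\mu_{t+1}\le\mu_t/(1-\gamma)$ -- so the density cannot recover by more than a constant factor between two checks -- together with a careful choice of the constants linking $z$, the target error, and the recovery factor $e^{O(1)}$. Everything else is either a direct appeal to the \textsc{SmoothBoost}/\textsc{rBoost} analysis or routine union bounds and arithmetic.
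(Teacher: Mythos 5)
Your proposal is broadly on the right track and matches the paper's structure for correctness, replicability, and sample complexity, but it has a genuine gap in the handling of the density between checkpoints, and you have proved the \emph{wrong direction} of the density-stability lemma for that purpose.

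The single new lemma the paper needs is the \emph{lower} bound $\mu_{t+1}(x)\ge(1-\gamma)^{1/2}\mu_t(x)$ (equivalently $d(\mu_{t+k})\ge d(\mu_t)/2$ for $k\le\lfloor1/\gamma\rfloor$, \cref{lem: scaled measures}). You instead prove the \emph{upper} bound $\mu_{t+1}(x)\le\mu_t(x)/(1-\gamma)$. Your upper bound does give a valid, alternative route to the round bound: once $d(\mu_{s^*})$ drops below a constant multiple of $\epsilon$, the density cannot regrow by more than $e^{O(1)}$ before the next checkpoint, so \textsc{rThreshold} fires there. (The paper argues in the opposite direction: the density cannot have \emph{halved} since the previous checkpoint, so that checkpoint already saw $d\le\epsilon/4$ and terminated. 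Both are fine.) So as a termination argument your route is legitimate and slightly different from the paper's.

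What you are missing, however, is that the lower bound is also needed -- and in the paper is used explicitly -- to guarantee that \textsc{RejectionSampler} succeeds with high probability in \emph{every} round, not just at checkpoints. When the loop continues past a checkpoint, you only know $d(\mu_{t_c})>\epsilon/4$; in the subsequent $\lfloor1/\gamma\rfloor$ rounds there is no check, so without a lower bound on how fast $d(\mu_t)$ can fall, the density could in principle drop far below the level assumed when sizing $|S_1|$, and \cref{lem:rejection} would no longer bound the failure probability. Your ``good event $G$'' simply asserts that \textsc{RejectionSampler} succeeds, but its probability cannot be bounded by $\Theta(\rho\epsilon\gamma^2)$ per round without first establishing $d(\mu_t)\ge\epsilon/8$ throughout -- which is exactly what $d(\mu_{t+k})\ge d(\mu_t)/2$ delivers. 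You would need to add this lower-bound lemma (your Case C/D sign analysis gives it almost for free, since $g_t(x)-g_{t-1}(x)\le 1-\gamma/(2+\gamma)\le 1$), otherwise the union bound giving $\Pr[G]\ge 1-\rho$ is not justified and the whole argument is circular.
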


Remember that the original version of \textsc{rBoost} had sample complexity $\widetilde{O}(\frac{m_{\cW(\Theta(\rho\epsilon\gamma^2))}}{\epsilon^2\gamma^2} + \frac{1}{\rho^2\epsilon^5\gamma^6})$. The dependence on $\gamma$ has therefore improved greatly. However, we are still not happy with the dependence on $\epsilon$. The main idea of our algorithm is therefore to use \textsc{rBoost}$^*$ as a subroutine and only call it with constant error parameter $\epsilon_0 = 1/16$. Our algorithm can be seen as a meta boosting algorithm where in each iteration, we call \textsc{rBoost}$^*$ to get a hypothesis with constant advantage. We then perform exponential weight updates similar to \textsc{AdaBoost} in order to make our algorithm only run for $T = O(\log(1/\epsilon))$ iterations. Remark that this entirely removes the problem of \textsc{rBoost}$^*$ having a bad dependence on $\epsilon$, since we only invoke it with a constant error parameter. This is the main insight needed to understand how our algorithm works.

\section{Our Replicable Boosting Algorithm}\label{sec:replicable_boosting}
In this section, we will present our new $\rho$-replicable boosting algorithm which can be found in \cref{alg:NewRBoost}.

The algorithm runs for $T$ iterations while maintaining functions $N_t, M_t, \mu_t$. In each iteration the algorithm performs rejection sampling to get samples $S_2$ drawn from distribution $\cD_{\mu_t}$. It then gets a hypothesis $h_t$ from \textsc{rBoost}$^*$ which has constant error of at most $\epsilon_0$ with respect to $\cD_{\mu_t}$. One can interpret $N_t(x)$ as a lower bound for counting how many of the first $t-1$ hypotheses that misclassify element $x$. However, in order to ensure high density of the updated reweighing function $\mu_t$, we check if the points sharing the largest count have a total probability mass of at least $\epsilon/16$ by using \textsc{rThreshold}. If not, we subtract $1$ from the largest count which suffices to ensure high density of $\mu_t$ (see \cref{lem: smoothness of measures}). The capped values are stored in $M_t$, and will be used in subsequent iterations. The value $c_t$ can be interpreted as a bound for the largest allowed count in iteration $t$, that is $c_t \geq M_t(x)$ for all $x \in \cX$.

\begin{algorithm}
    \caption{\textsc{rMetaBoost}$_{\rho, \epsilon}$($S$, $\cW$)}\label{alg:NewRBoost}
    \def\NoNumber#1{{\def\alglinenumber##1{}\State #1}\addtocounter{ALG@line}{-1}}
    \begin{algorithmic}[1]
        \REQUIRE Samples $S$ i.i.d.\ from $\cD$, replicable $\gamma$-weak learner $\cW$, replicability $\rho$, error $\epsilon$.
        \ENSURE Hypothesis $H: \cX \rightarrow \{-1, 1\}$.
        \STATE $N_1(x) \deq 0$
        \STATE $M_1(x) \deq 0$
        \STATE $\mu_1(x) \deq 1$
        \STATE $c_1 \gets 0$
        \FOR[$T=O(1/\epsilon)$]{$t=1$ to $T$}
            \STATE $\cD_{\mu_t}(x) \deq \mu_t(x)\cD(x)/d(\mu_t)$
            \STATE $S_1 \gets \widetilde{O}(m_{\textsc{rBoost}^*}(\rho_0,\epsilon_0)/\epsilon)$ fresh samples from $S$\label{alg:NewRBoost:rejection_sample}
            \item[]\COMMENT{\texttt{$\rho_0 = \rho/(6T)$, $\epsilon_0=1/16$}}
            \STATE $S_2 \gets \textsc{RejectionSampler}(S_1, m_{\textsc{rBoost}^*}(\rho_0, \epsilon_0), \mu_t)$
            \addtocounter{ALC@line}{1}
            \item $h_t \gets \textsc{rBoost}^*_{\rho_0,\epsilon_0}(S_2,\cW)$
            \STATE $N_{t+1}(x) \deq M_t(x)+\ind\{h_t(x) \neq f(x)\}$
            \STATE $S_3 \gets \widetilde{O}(\frac{1}{\rho^2\epsilon})$ fresh samples from $S$\label{alg:NewRBoost:threshold_sample}
            \STATE\label{alg:NewRBoost:threshold} $b_{t+1} \gets \textsc{rThreshold}(S_3, \epsilon/16, \phi)$
            \item[] \COMMENT{\texttt{$\phi(x) = \ind\{N_{t+1}(x)=c_t+1\}$}}
            \STATE $c_{t+1} \gets c_t + b_{t+1}$
            \STATE $M_{t+1}(x) \deq \min \lp N_{t+1}(x), c_{t+1}\rp$
            \STATE $\mu_{t+1}(x) \deq \exp(M_{t+1}(x) - c_{t+1})$
        \ENDFOR
        \STATE \textbf{Return} $H=\sign\lp\sum_{t=1}^T h_t\rp$
    \end{algorithmic}
\end{algorithm}

Now, before going into the analysis of the algorithm, we will present the guarantees of the \textsc{RejectionSampler} which we use to draw samples from $\cD_\mu$. The pseudocode and proofs of the below guarantees are described by \citet{impagliazzo2022reproducibility}, so we will not repeat those here.

\begin{lemma}[Rejection Sampling \citep{impagliazzo2022reproducibility}]\label{lem:rejection}
    For any $\epsilon \in (0, 1]$,
    if $\mu$ has density $d(\mu) \geq \epsilon$ and $S\sim \cD^m$ where $m\ge 8\log(1/\delta) m_{target}/\epsilon$, then \textsc{RejectionSampler}$(S,m_{target},\mu)$ outputs a sample $S_{out} \sim \cD_\mu^{m_{target}}$ with probability at least $1 - \delta$.
\end{lemma}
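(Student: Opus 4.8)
The plan is to establish two things: that \textsc{RejectionSampler}$(S,m_{target},\mu)$ fails to collect $m_{target}$ accepted points with probability at most $\delta$, and that, conditioned on not failing, its output is an i.i.d.\ sample from $\cD_\mu$; combining these gives the lemma. I will assume, following \citet{impagliazzo2022reproducibility}, that the sampler scans $S=(x_1,\dots,x_m)$ in order, independently ``accepts'' each $x_i$ with probability $\mu(x_i)$ using a fresh internal coin, and returns the first $m_{target}$ accepted points, failing if strictly fewer than $m_{target}$ points are accepted.

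\emph{Step 1: the accepted points are i.i.d.\ $\cD_\mu$, with a law that does not depend on the acceptance pattern.} The basic rejection-sampling identity is that if $x\sim\cD$ and we keep $x$ with probability $\mu(x)$, then the keep-probability is exactly $d(\mu)=\E_{x\sim\cD}[\mu(x)]$ and, conditioned on keeping, $x$ has law $\mu(\cdot)\cD(\cdot)/d(\mu)=\cD_\mu$. To get the joint statement cleanly — and, crucially, to be able to condition later on the global ``enough acceptances'' event without distorting the kept points — I would replace step $i$ by the equivalent two-stage experiment: first draw a bit $A_i\sim\mathrm{Bernoulli}(d(\mu))$; then draw $x_i\sim\cD_\mu$ if $A_i=1$, and draw $x_i$ from the complementary reweighting $\propto(1-\mu(x))\cD(x)$ if $A_i=0$. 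A one-line computation shows this reproduces the joint law of $(A_i,x_i)$, hence of the entire run of the algorithm. In this description the bits $A_1,\dots,A_m$ are i.i.d., and the subsequence of $x_i$ with $A_i=1$ is a sequence of i.i.d.\ $\cD_\mu$ variables that is independent of $(A_1,\dots,A_m)$.

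\emph{Step 2: failure is rare.} Let $Y=\sum_{i=1}^m A_i$ count acceptances. The $A_i$ are i.i.d.\ Bernoulli with mean $d(\mu)\ge\epsilon$, so $\E[Y]=m\,d(\mu)\ge m\epsilon\ge 8\log(1/\delta)\,m_{target}$. A multiplicative Chernoff bound gives $\Pr\lk Y\le\tfrac12\E[Y]\rk\le\exp\lp-\E[Y]/8\rp\le\exp\lp-\log(1/\delta)\,m_{target}\rp\le\delta$, using $m_{target}\ge1$ and $\delta<1$; and since $\tfrac12\E[Y]\ge 4\log(1/\delta)\,m_{target}\ge m_{target}$ in the relevant regime (recall $\delta$ is a failure probability), we have $\{Y<m_{target}\}\subseteq\{Y\le\tfrac12\E[Y]\}$, so $\Pr[Y<m_{target}]\le\delta$. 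The constant $8$ in the sample-size hypothesis is exactly what is needed to make these inequalities go through.

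\emph{Step 3: combine.} The failure event $\{Y<m_{target}\}$ is measurable with respect to $(A_1,\dots,A_m)$ alone, so conditioning on its complement leaves the accepted subsequence i.i.d.\ $\cD_\mu$ by Step 1; the output, being the first $m_{target}$ of these points, is therefore distributed as $\cD_\mu^{m_{target}}$. Together with Step 2, this is precisely the claim. The only genuinely delicate point is the independence asserted at the end of Step 1 — that ``at least $m_{target}$ acceptances'' must not secretly reweight the accepted samples — and the two-stage reformulation is exactly what makes the acceptance pattern and the accepted values independent; after that, the argument is the textbook rejection-sampling fact plus a Chernoff estimate with pre-tuned constants, so I expect no real obstacles.
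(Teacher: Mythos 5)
The paper itself does not prove this lemma: it is imported verbatim from \citet{impagliazzo2022reproducibility}, and the authors explicitly defer both the pseudocode of \textsc{RejectionSampler} and the proof to that reference, so there is no in-paper argument to compare against. Your proof is the standard rejection-sampling argument and is essentially correct: the two-stage reformulation (first an acceptance bit $A_i\sim\mathrm{Bernoulli}(d(\mu))$, then a point drawn from $\mathcal{D}_\mu$ or from the complementary reweighting) is exactly the right device to make the acceptance pattern independent of the accepted values, so that conditioning on the event $\{Y\ge m_{target}\}$, which is measurable in the bits alone, does not bias the output; and the multiplicative Chernoff bound with deviation $1/2$ gives $\Pr\lk Y\le \E[Y]/2\rk\le\exp(-\E[Y]/8)\le\delta^{m_{target}}\le\delta$ as you state. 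The one point to tighten is the inclusion $\{Y<m_{target}\}\subseteq\{Y\le\E[Y]/2\}$: it requires $\E[Y]\ge 2m_{target}$, i.e.\ $8\log(1/\delta)\ge 2$, so $\delta\le e^{-1/4}$, which you wave at only as ``the relevant regime.'' This is not really a defect of your argument but of the bare statement: with the constant $8$ the lemma cannot hold for $\delta$ close to $1$ (then $m\ge 8\log(1/\delta)m_{target}/\epsilon$ may force fewer than $m_{target}$ expected acceptances, and failure becomes overwhelmingly likely for large $m_{target}$), so the lemma implicitly assumes $\delta$ bounded away from $1$ --- which is how it is used here, with $\delta=\rho/(6T)$. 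State that restriction explicitly (or enlarge the constant) and your proof is a complete, self-contained substitute for the cited one.
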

\begin{lemma}[Composing Replicable Algorithms with Rejection Sampling \citep{impagliazzo2022reproducibility}]\label{lem:rejection_composition}
    Let $\cA(S)$ be a $\rho$-replicable algorithm with sample complexity $m$. Let $\mu: \cX \rightarrow [0, 1]$. Then let $\cB$ be the algorithm that runs $\cA$ with samples drawn from $\cD_\mu$ using rejection sampling.
    Let $q$ be the failure probability of \textsc{RejectionSampler}. Then $\cB$ is $(2q + 2\rho)$-replicable.
\end{lemma}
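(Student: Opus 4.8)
The statement to prove is \Cref{lem:rejection_composition}: if $\cA$ is a $\rho$-replicable algorithm with sample complexity $m$, and $\cB$ runs $\cA$ on a sample drawn from $\cD_\mu$ via \textsc{RejectionSampler}, then $\cB$ is $(2q+2\rho)$-replicable, where $q$ is the failure probability of \textsc{RejectionSampler}. (We note this lemma is attributed to \citet{impagliazzo2022reproducibility}, so the goal here is to reproduce the short argument.) The plan is a coupling/union-bound argument over two independent runs of $\cB$ on fresh samples $S_1, S_2 \sim \cD^{m'}$ with \emph{shared} internal randomness $r$. The randomness $r$ of $\cB$ splits into two independent parts: $r_{\mathrm{RS}}$, used by \textsc{RejectionSampler}, and $r_{\cA}$, used by $\cA$ itself. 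The key structural fact I would use is that, \emph{conditioned on \textsc{RejectionSampler} not failing on a given input sample}, its output is distributed exactly as $\cD_\mu^{m_{\mathrm{target}}}$ (this is precisely the guarantee of \Cref{lem:rejection}), and crucially this conditional distribution does not depend on which underlying sample $S_i$ we started from.

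The steps I would carry out are as follows. First, let $E_i$ (for $i=1,2$) be the event that \textsc{RejectionSampler} succeeds on run $i$; by \Cref{lem:rejection} each has $\Pr[E_i^c] \le q$, so by a union bound $\Pr[E_1^c \cup E_2^c] \le 2q$. Second, condition on $E_1 \cap E_2$ and on the shared randomness $r_{\mathrm{RS}}$: I would argue that under this conditioning the two sub-samples $S_\cW^{(1)}, S_\cW^{(2)}$ fed to $\cA$ are each marginally distributed as $\cD_\mu^{m}$. The subtle point is that they must also be \emph{independent} of each other and of $r_\cA$ so that we may legitimately invoke the replicability of $\cA$ — this follows because $S_1, S_2$ are independent, $r_\cA$ is independent of everything, and conditioning on success of rejection sampling acts independently on the two runs. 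Third, invoke $\rho$-replicability of $\cA$: with $\cA$ run on two independent samples from the \emph{same} distribution $\cD_\mu$ using the same randomness $r_\cA$, we get $\Pr[\cA(S_\cW^{(1)};r_\cA) \ne \cA(S_\cW^{(2)};r_\cA) \mid E_1\cap E_2] \le \rho$. Finally, combine: $\Pr[\cB(S_1)\ne\cB(S_2)] \le \Pr[E_1^c\cup E_2^c] + \Pr[\cB(S_1)\ne\cB(S_2) \mid E_1\cap E_2] \le 2q + \rho$. This gives $(2q+\rho)$-replicability, which in particular implies the claimed $(2q+2\rho)$-replicability (the slack presumably absorbing any mild dependence issue in the original formulation).

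\textbf{Main obstacle.} The only genuinely delicate point is the conditioning argument in step two: one must be careful that conditioning on the success event $E_i$ does not introduce a dependence between the output sample $S_\cW^{(i)}$ and the shared randomness $r_{\mathrm{RS}}$ in a way that breaks the applicability of $\cA$'s replicability guarantee. Concretely, \textsc{RejectionSampler} consumes $r_{\mathrm{RS}}$ to decide accept/reject for each point, so the accepted sub-sample is a function of both $S_i$ and $r_{\mathrm{RS}}$; the claim is that, conditioned on producing a full-size output (event $E_i$), the resulting $m$-tuple has law $\cD_\mu^m$ regardless of $r_{\mathrm{RS}}$, and that this holds \emph{jointly} across the two runs with the shared $r_{\mathrm{RS}}$. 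Making this precise is exactly where the structure of the rejection sampler in \citet{impagliazzo2022reproducibility} enters, and it is why the bound is $2q + 2\rho$ rather than $2q+\rho$: one pays an extra additive $\rho$ (or uses a coupling) to handle the fact that the two runs share $r_{\mathrm{RS}}$ but see different data, so the pairs $(S_\cW^{(1)}, r_\cA)$ and $(S_\cW^{(2)}, r_\cA)$ are not perfectly in the product form that the definition of replicability literally requires. I would resolve this either by appealing to the explicit construction (where $r_{\mathrm{RS}}$ can be taken as a sufficiently long stream of i.i.d.\ uniform bits so that each run independently extracts an exact $\cD_\mu^m$ sample) or by a short coupling lemma; everything else is a routine union bound.
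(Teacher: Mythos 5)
First, note that the paper itself does not prove \cref{lem:rejection_composition}: it is imported verbatim from \citet{impagliazzo2022reproducibility} ("we will not repeat those here"), so there is no in-paper proof to match against and your proposal has to stand on its own.

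As written, it has a genuine gap at its central step. You claim that, conditioned on $E_1\cap E_2$ (and the shared randomness $r_{\mathrm{RS}}$), the two rejection-sampled sets $S_{\cW}^{(1)},S_{\cW}^{(2)}$ are independent and each distributed as $\cD_\mu^m$, and you then invoke $\rho$-replicability of $\cA$ once to get $2q+\rho$. This is false in both respects. Conditioned on a fixed $r_{\mathrm{RS}}$ (say the stream of uniform thresholds $u_i$ used for accept/reject), an accepted point is distributed as $\cD$ restricted to $\{x:\mu(x)\ge u_i\}$, not as $\cD_\mu$; and after averaging over the shared $r_{\mathrm{RS}}$, the two outputs are correlated through it, and even their marginals conditioned on $E_1\cap E_2$ are no longer exactly $\cD_\mu^m$ (conditioning on the \emph{other} run's success leaks information about $r_{\mathrm{RS}}$ and biases the law; a two-point example with $\mu\in\{0.1,1\}$ already exhibits both effects). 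The rejection sampler's guarantee (\cref{lem:rejection}) only says that, conditioned on its \emph{own} success, a single run's output is $\cD_\mu^{m}$. So the hypotheses of the replicability definition (two independent samples from the same distribution, independent of $r_\cA$) are not met, and your $2q+\rho$ derivation does not go through; the remedies sketched in your "main obstacle" paragraph (a longer $r_{\mathrm{RS}}$ stream giving "independent" extractions, or an unspecified coupling) do not fix this, since independence across the two runs is exactly what sharing $r_{\mathrm{RS}}$ destroys.

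The missing idea — and the reason the stated bound is $2q+2\rho$ rather than $2q+\rho$ — is a hybrid (triangle-inequality) argument: introduce an ideal run of $\cA$ on a fresh sample $T'\sim\cD_\mu^{m}$, independent of everything, with the same $r_\cA$. Then
\[
\p\bigl[\cB(S_1;r)\ne\cB(S_2;r)\bigr]\le \p[E_1^c]+\p[E_2^c]+\p\bigl[\cA(S_\cW^{(1)};r_\cA)\ne\cA(T';r_\cA)\bigr]+\p\bigl[\cA(T';r_\cA)\ne\cA(S_\cW^{(2)};r_\cA)\bigr],
\]
and each of the last two terms is at most $\rho$, because conditioned on $E_i$ alone the pair $(S_\cW^{(i)},T')$ consists of two independent $\cD_\mu^{m}$ samples with shared $r_\cA$, which is exactly the setting of the replicability definition. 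This yields $2q+2\rho$ and is the natural reconstruction of the cited proof; your writeup identifies the obstacle but does not actually carry out this (or any other valid) resolution.
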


\subsection{Analysis of \textsc{rMetaBoost}}
We are now ready to analyze \textsc{rMetaBoost}. To make it easier to follow the analysis, we will split it into four parts.
\begin{enumerate}[noitemsep, topsep=0pt]
    \item Correctness,
    \item Replicability,
    \item Sample complexity,
    \item Failure probability.
\end{enumerate}
We will start with correctness. However, before going into the formal details, we will give an explanation of the high level ideas in the proof. First, observe that if we did not cap the weights $N_t$, the multiplicative weight updates would be very similar to the updates made in \textsc{AdaBoost}. Recall that for any $t\in [T]$, $M_t(x)$ is exactly the number of misclassifications of $x$ minus the amount of times we have capped the weight so far. Hence, if we did not cap the weights by $c_t$ each iteration, $x$ would be misclassified by the final hypothesis $H$ only if $M_{T+1}(x) \geq T/2$.
We will now take the capping into account. We first show using an argument similar to the standard analysis of \textsc{AdaBoost} that the probability of drawing an $x$ from $\cD$ for which $M_{T+1}(x) \geq T/4$ is at most $\epsilon/2$.
What remains is to argue that the probability of drawing an $x$ which is misclassified but simultaneously satisfies $M_{T+1}(x) < T/4$ is small. The only way this can happen is if there were at least $T/4$ iterations in which we capped down the value of $N_{t+1}(x)$ when calculating $M_{t+1}(x)$, since in such iterations we would not increment $M_{t+1}(x)$ even though $h_t$ misclassified $x$.
Observe that due to the threshold check, the total probability mass (w.r.t.\ $\cD$) of points whose value of $N_{t+1}$ were capped in a single iteration cannot exceed $\epsilon/8$. Therefore, after $T$ iterations, the total probability mass of points, whose value of $N_{t+1}$ were capped $T/4$ times is at most $T(\epsilon/8)/(T/4) = \epsilon/2$. So in total, the probability mass of all the misclassified points is at most $\epsilon$. We now prove this formally.

\begin{lemma}[Correctness]
    Put $T\ge 8\log(2/\epsilon)$  and $\epsilon_0 = 1/16$. Assuming that all subroutines of the algorithm succeed in every iteration, we achieve an error of at most $\epsilon$ over the distribution $\cD$, i.e.\ $\er_{\cD}(H) \leq \epsilon$.    
\end{lemma}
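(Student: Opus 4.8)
The plan is to carry out the two-part strategy from the paragraph preceding the statement and combine the parts by a union bound. Write $z_t(x) \deq \ind\{h_t(x) \neq f(x)\}$ and let $\ell_t(x) \deq N_{t+1}(x) - M_{t+1}(x)$ denote the amount by which the count of $x$ is ``capped'' in iteration $t$. Since $N_{t+1}(x) = M_t(x) + z_t(x)$ by construction, we have $M_{t+1}(x) = M_t(x) + z_t(x) - \ell_t(x)$, and because $M_1 \equiv 0$, telescoping over $t = 1, \dots, T$ gives $\sum_{t=1}^T z_t(x) = M_{T+1}(x) + \sum_{t=1}^T \ell_t(x)$. As $H = \sign(\sum_t h_t)$ and $\sum_t h_t(x)f(x) = T - 2\sum_t z_t(x)$, a point $x$ with $H(x) \neq f(x)$ must satisfy $\sum_t z_t(x) \ge T/2$, hence at least one of $M_{T+1}(x) \ge T/4$ or $\sum_t \ell_t(x) \ge T/4$ holds. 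So it suffices to show $\Pr_{x\sim\cD}[M_{T+1}(x) \ge T/4] \le \epsilon/2$ and $\Pr_{x\sim\cD}[\sum_{t=1}^T \ell_t(x) \ge T/4] \le \epsilon/2$.

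For the first bound I would run the standard \textsc{AdaBoost} potential argument on $\Psi_t \deq \E_{x\sim\cD}[\exp(M_t(x))]$. From $M_{t+1}(x) = \min(N_{t+1}(x), c_{t+1}) \le N_{t+1}(x) = M_t(x) + z_t(x)$ and the identity $\exp(z) = 1 + (e-1)z$ valid for $z \in \{0,1\}$, we get $\Psi_{t+1} \le \Psi_t + (e-1)\,\E_{x\sim\cD}[\exp(M_t(x))z_t(x)]$. Since $\mu_t(x) = \exp(M_t(x) - c_t)$, the last expectation equals $e^{c_t}\,\E_{x\sim\cD}[\mu_t(x)z_t(x)] = e^{c_t} d(\mu_t)\,\er_{\cD_{\mu_t}}(h_t)$, which is at most $\epsilon_0 e^{c_t} d(\mu_t) = \epsilon_0 \Psi_t$ because, under the assumption that \textsc{RejectionSampler} and \textsc{rBoost}$^*$ (\cref{thm:rBoost}) succeed, $h_t$ has error at most $\epsilon_0$ with respect to $\cD_{\mu_t}$. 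Hence $\Psi_{t+1} \le (1 + (e-1)\epsilon_0)\Psi_t$, and as $\Psi_1 = 1$ we get $\Psi_{T+1} \le (1 + (e-1)\epsilon_0)^T \le \exp((e-1)\epsilon_0 T)$. Markov's inequality then gives $\Pr_{x\sim\cD}[M_{T+1}(x) \ge T/4] \le \Psi_{T+1} e^{-T/4} \le \exp(-(1/4 - (e-1)\epsilon_0)T)$; with $\epsilon_0 = 1/16$ one checks $1/4 - (e-1)/16 > 1/8$ (equivalently $e < 3$), so this is at most $e^{-T/8} \le \epsilon/2$ using $T \ge 8\log(2/\epsilon)$.

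For the second bound, first observe $\ell_t(x) \in \{0,1\}$: since $M_t(x) \le c_t$ we have $N_{t+1}(x) \le c_t + 1 \le c_{t+1} + 1$, so $\ell_t(x) = \max(0, N_{t+1}(x) - c_{t+1}) \le 1$; moreover $\ell_t(x) = 1$ forces $b_{t+1} = 0$ (else $c_{t+1} = c_t+1$ and $N_{t+1}(x) = c_t+2 > c_t+1$, impossible) and $N_{t+1}(x) = c_t + 1$, which is exactly the event $\phi(x) = 1$ for the function $\phi$ used in iteration $t$'s call to \textsc{rThreshold}. Since that call is assumed to succeed and returned $b_{t+1} = 0$, the guarantee of \cref{lem:rThreshold} with $z = \epsilon/16$ yields $\E_{x\sim\cD}[\ind\{\ell_t(x) = 1\}] = \Pr_{x\sim\cD}[\phi(x)=1] < 2z = \epsilon/8$ — and this holds trivially in iterations with $b_{t+1} = 1$, where $\ell_t \equiv 0$. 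Summing over $t$ and applying Markov's inequality, $\Pr_{x\sim\cD}[\sum_{t=1}^T \ell_t(x) \ge T/4] \le \frac{\E_{x\sim\cD}[\sum_t \ell_t(x)]}{T/4} < \frac{T\epsilon/8}{T/4} = \epsilon/2$. A union bound over the two displayed events then gives $\er_\cD(H) \le \epsilon$.

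The only place requiring care is the bookkeeping among $N_{t+1}$, $M_{t+1}$, and $c_{t+1}$: verifying $\ell_t(x) \in \{0,1\}$ and that $\{\ell_t(x) = 1\}$ is precisely the event whose $\cD$-mass the threshold check controls, and confirming that the capping $\min(\cdot, c_{t+1})$ does not interfere with the potential argument. The latter is immediate, since replacing $N_{t+1}$ by the smaller $M_{t+1}$ can only decrease $\Psi_{t+1}$, so the recursion and constants go through unchanged; the remainder is routine manipulation and two applications of Markov's inequality.
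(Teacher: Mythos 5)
Your proof is correct and follows essentially the same approach as the paper: an exponential-potential argument bounding $\E[\exp(M_{T+1}(X))]$ for the event $\{M_{T+1}(X)\ge T/4\}$, plus the telescoping identity $\sum_t z_t = M_{T+1} + \sum_t \ell_t$ together with the \textsc{rThreshold} guarantee to bound the capping contribution. The only difference is cosmetic — you split $\{H(X)\ne f(X)\}$ upfront into $\{M_{T+1}\ge T/4\}\cup\{\sum_t\ell_t\ge T/4\}$ and apply Markov directly to $\E[\sum_t\ell_t]$, whereas the paper conditions on $B=A^c\cap\{H\ne f\}$ and rearranges, but the two are equivalent and yield the same constants.
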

\vspace{-1em}
\begin{proof}
    By definition of $M_{T+1}, N_{T+1}$ and $\mu_T$
    \begin{align}
        & \E[\exp(M_{T+1}(X))] \le \E[\exp(N_{T+1}(X))] \notag \\
        & = \E[\exp(M_T(X))\exp(\ind\{h_T(X)\ne f(X)\})] \notag \\ 
        & = e^{c_T}\E[\mu_T(X)\exp(\ind\{h_T(X)\ne f(X)\})] \notag \\
        & = e^{c_T}\big(\E[\mu_T(X)\ind\{h_T(X)=f(X)\}] \notag \\
         & \quad +e\E[\mu_T(X)\ind\{h_T(X)\ne f(X)\}]\big). \label{eq: correctness}
    \end{align}
    Now, since $\cD_{\mu_T} = \frac{\mu_T \cdot \cD }{d(\mu_T)}$, we can rewrite the above to an expectation involving $Y \sim \cD_{\mu_T}$ such that \eqref{eq: correctness} is equal to
    \begin{align}
        & e^{c_T}d(\mu_T)\big( \E[\ind\{h_T(Y)=f(Y)\}] \notag \\
        & \quad + e\E[\ind\{h_T(Y)\ne f(Y)\}] \big) \notag \\
        & = e^{c_T}d(\mu_T)\lp \er_{\cD_{\mu_T}}(h_T)(e-1) + 1 \rp \notag \\
        & \le e^{c_T}d(\mu_T) \exp\lp(e-1)\er_{\cD_{\mu_T}}(h_T)\rp \notag \\
        & \le e^{c_T} d(\mu_T) \exp(2\epsilon_0), \label{eq: recurse}
    \end{align}
    where the final inequality follows since $h_t$ has error at most $\epsilon_0$ under $\cD_{\mu_T}$ by \cref{thm:rBoost}.
    Now, note that 
    \begin{align*}
        e^{c_T} d(\mu_T) &= e^{c_{T}}\E[\mu_T(X)] = e^{c_T}\E[\exp(M_T(X)-c_T)]\\
        &= \E[\exp(M_T(X))].
    \end{align*}
    Plugging this into \eqref{eq: recurse}, we recursively get
    \begin{align*}
        \E[\exp(M_{T+1}(X))] & \le \E[\exp(M_{T}(X))]\exp(2\epsilon_0) \\
        & \le \cdots \le \exp(2T\epsilon_0)
    \end{align*}
    Now, define the sets $A = \{x : M_{T+1}(x) \ge T/4\}$ and $B = A^c \cap \{x : H(x) \ne f(x)\}$ and note that 
    \[
   \er_\cD(H) \le \p(X\in A)+\p(X\in B).
    \]
    For bounding $\p(X\in A)$, we have
    \begin{align*}
        \E\left[\exp\lp M_{T+1}(X)\rp\right] & \ge \E\left[\exp\lp M_{T+1}(X)\rp \ind_A(X) \right] \\
        & \ge \exp(T/4)\p(X\in A),
    \end{align*}

    and hence 
    \begin{align*}
        \p(X\in A) & \le \exp(-T/4)\E\left[\exp\lp M_{T+1}(X)\rp\right] \\
        & \le \exp(T\lp2\epsilon_0-1/4\rp) \\
        & = \exp(-T/8) \leq \epsilon/2.
    \end{align*}
    Bounding $\p(X\in B)$:

   \noindent First, observe that $0\le M_{t+1}(x)-M_{t}(x) \le 1$ for all $t\in [T], x \in \mathcal{X}$. Now, let $x\in B$. Since $H$ is a majority classifier, we have
    \begin{align*}
        T/2 & \le \sum_{t=1}^{T} \ind\{h_t(x) \ne f(x)\} \\
        & = \sum_{t=1}^{T} \ind\!\left\{N_{t+1}(x) > c_{t+1}\right\} + M_{T+1}(x). 
    \end{align*}
    Taking the expectation over the event $\{X\in B\}$ on both ends of the above then yields
    \begin{align*}
        T&\p(X\in B)/2 = T\E[\ind\{X\in B\}]/2 \\
         \le &\sum_{t=1}^T\E[\ind\{N_{t+1}(X) > c_{t+1}\}\ind\{X\in B\}] \\
         &+ \E[M_{T+1}(X)\ind\{X\in B\}] \\
         \le &\sum_{t=1}^T\p[N_{t+1}(X) > c_{t+1}] +\E[M_{T+1}(X)\ind\{X\in B\}] \\
        \le &\sum_{t=1}^T\p[N_{t+1}(X) > c_{t+1}] + T\p(X\in B)/4. 
    \end{align*}
    Due to the threshold check in line \ref{alg:NewRBoost:threshold} and \cref{lem:rThreshold}, we know that if $b_t=0$, then we must have $\p[N_{t+1}(X)=c_{t}+1] \leq  \epsilon/8$.
    Furthermore, in this case $c_{t+1} = c_{t}$. Hence,
    \begin{align*}
        \p[N_{t+1}(X) > c_{t+1}] & = \p[N_{t+1}(X) > c_t] \\
        & = \p[N_{t+1}(X) = c_t+1] \leq \epsilon/8.
    \end{align*}
    If instead $b_t=1$, then 
    \[
    N_{t+1}(x) \le M_{t}(x)+1\le c_{t} + 1 =c_{t+1},
    \]
    which implies 
    \[
    \p[N_{t+1}(X) > c_{t+1}] = 0.
    \]
    Hence, we get the bound
    \begin{align*}
        &T\p(X\in B)/2 \\
         & \le \sum_{t=1}^T\p[N_{t+1}(X) > c_{t+1}] + T\p(X\in B)/4  \\
        & \le T\epsilon/8 +  T\p(X\in B)/4.
    \end{align*}
    Rearranging gives $\p(X\in B) \leq \epsilon/2$, meaning we in total have
    \[
        \er_\cD(H) \leq \p(X\in A) + \p(X\in B) = \epsilon/2 + \epsilon/2 = \epsilon. \qedhere
    \]
\end{proof}

For the remaining parts, we need the guarantee of Lemma \ref{lem:rejection} that rejection sampling fails with low probability when $\mu_t$ has large density. Hence, we first show that the density is indeed large.

\begin{lemma}[High density of $\mu_t$] \label{lem: smoothness of measures}
    Assume that \textsc{rThreshold} succeeds in every iteration in \textsc{rMetaBoost}. Then for any $t\in [T]$, $\mu_t$ has density $d(\mu_t)\geq\epsilon/32$.
\end{lemma}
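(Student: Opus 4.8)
The plan is to introduce, for each point $x\in\cX$ and round $t\in[T]$, the \emph{gap} $a_t(x):=c_t-M_t(x)$. Since $M_t(x)=\min(N_t(x),c_t)\le c_t$ we have $a_t(x)\ge 0$, and $\mu_t(x)=\exp(M_t(x)-c_t)=e^{-a_t(x)}$, so that $d(\mu_t)=\E_{X\sim\cD}[e^{-a_t(X)}]$ and the claim becomes $\E_X[e^{-a_t(X)}]\ge\epsilon/32$. First I would unwind the update rules in \cref{alg:NewRBoost}: writing $e_t(x):=\ind\{h_t(x)\ne f(x)\}$, we have $N_{t+1}(x)=M_t(x)+e_t(x)$ and $c_{t+1}=c_t+b_{t+1}$, so using $c_{t+1}-\min(u,c_{t+1})=\max(c_{t+1}-u,0)$ one gets the reflected-walk recursion $a_{t+1}(x)=\max\big(a_t(x)+b_{t+1}-e_t(x),\,0\big)$, started from $a_1(x)=0$.

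Then I would prove $d(\mu_t)\ge\epsilon/32$ by induction on $t$, the base case $d(\mu_1)=\E[1]=1\ge\epsilon/32$ being immediate. For the inductive step I split on the bit $b_{t+1}$ produced in iteration $t$. If $b_{t+1}=0$, the recursion gives $a_{t+1}(x)=\max(a_t(x)-e_t(x),0)\le a_t(x)$ pointwise, hence $e^{-a_{t+1}(x)}\ge e^{-a_t(x)}$ and $d(\mu_{t+1})\ge d(\mu_t)\ge\epsilon/32$ by the hypothesis. If $b_{t+1}=1$, the recursion says $a_{t+1}(x)$ equals $a_t(x)$ when $e_t(x)=1$ and $a_t(x)+1$ when $e_t(x)=0$, so $d(\mu_{t+1})=\E[e^{-a_t(X)}\ind\{e_t(X)=1\}]+e^{-1}\E[e^{-a_t(X)}\ind\{e_t(X)=0\}]\ge \E[e^{-a_t(X)}\ind\{e_t(X)=1\}]\ge \p_X[a_t(X)=0,\,e_t(X)=1]$, where the last step uses that $e^{-a_t(X)}=1$ on the event $a_t(X)=0$.

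The crux is to identify $\{a_t(x)=0,\,e_t(x)=1\}$ with the event the threshold check actually tested: $N_{t+1}(x)=M_t(x)+e_t(x)\le c_t+1$, with equality exactly when $M_t(x)=c_t$ (i.e. $a_t(x)=0$) and $e_t(x)=1$, so this event equals $\{\phi(x)=1\}$ for the $\phi$ passed to \textsc{rThreshold} in line~\ref{alg:NewRBoost:threshold}, and $\p_X[a_t(X)=0,e_t(X)=1]=\E_X[\phi(X)]$. Since $b_{t+1}=\textsc{rThreshold}(S_3,\epsilon/16,\phi)=1$ and \textsc{rThreshold} is assumed to succeed in this iteration, \cref{lem:rThreshold} forbids $\E_X[\phi(X)]\le(\epsilon/16)/2$ (that would force output $0$), so $\E_X[\phi(X)]>\epsilon/32$ and therefore $d(\mu_{t+1})>\epsilon/32$. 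This closes the induction.

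The main obstacle I anticipate is not any individual inequality but choosing the right invariant. The obvious bound from the recursion is only $d(\mu_{t+1})\ge e^{-1}d(\mu_t)$ on the rounds with $b_{t+1}=1$, which decays to $\mathrm{poly}(\epsilon)$ over the $T=\Theta(\log(1/\epsilon))$ rounds and is far too weak. The key realization is that it is precisely on those rounds that the threshold check certifies a fresh $\epsilon/32$ mass of points at gap zero, and the $e_t(x)=1$ branch of the update leaves that mass at gap zero; so on an increment round one should lower-bound $d(\mu_{t+1})$ by that certified mass \emph{alone}, discarding $d(\mu_t)$ entirely, which is what makes the bound self-sustaining across all $T$ rounds.
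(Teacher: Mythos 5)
Your proof is correct and takes essentially the same approach as the paper: both argue by induction on $t$ with a case split on $b_{t+1}$, and both use the \textsc{rThreshold} guarantee to certify that when $b_{t+1}=1$, the mass of points $x$ with $M_t(x)=c_t$ and $h_t(x)\ne f(x)$ (equivalently $N_{t+1}(x)=c_t+1$) exceeds $\epsilon/32$. The only cosmetic difference is that the paper first lower-bounds $d(\mu_t)\ge\p[M_t(X)\ge c_t]$ and inducts on the latter quantity, whereas you run the induction directly on $d(\mu_t)$ via the reparametrization $a_t=c_t-M_t$.
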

\begin{proof}
    Let $X\sim \cD$. Then using the law of total expectation and the definition of $\mu_t$ we have
    \begin{align*}
        d(\mu_t) & = \E[\mu_t(X)]\\
        &\ge \E[\mu_t(X) | M_t(X)\geq c_t]\p[M_t(x) \ge c_t] \\
        & = \E[\exp(M_{t}(X)-c_{t}) | M_t(X)\geq c_t]\p[M_t(x) \ge c_t] \\
        & \ge \p[M_t(X)\ge c_t].
    \end{align*}
    We now show by induction in $t$ that $\p[M_t(X)\ge c_t]>\epsilon/32$. For $t=1$, we have $M_1(X)=c_1=0$, and hence $\p[M_t(X)\ge c_t] = 1$. Now, assume the claim holds for $t$. We will then show that it holds for $t+1$ by case analysis on $b_{t+1}$. If $b_{t+1}=0$, we have $c_{t+1}=c_t$ and
    \begin{align*}
        \p[M_{t+1}(X)\ge c_{t+1}] & = \p[M_{t+1}(X)\ge c_{t}] \\
        &  \ge \p[M_{t}(X)\ge c_{t}]  > \epsilon/32
    \end{align*}
    using the induction hypothesis and the fact that $M_{t+1}(X)\ge M_t(X).$ Now assume $b_{t+1}=1$. Then we know by \cref{lem:rThreshold} that $\p\lk N_{t+1}(X) = c_t + 1\rk > \epsilon/32$. Since $b_{t+1}=1$, then $c_{t+1}=c_t+1$ which then implies that
    \begin{align*}
        & \p[M_{t+1}(X)\ge c_{t+1}] = \p[M_{t+1}(X)\ge c_t+1] \\
        & = \p[\min(N_{t+1}(X),c_{t+1}) \ge c_t+1] \\
        & = \p[N_{t+1}(X) \ge c_t+1] > \epsilon/32. \qedhere
    \end{align*}
\end{proof}

\begin{lemma}[Replicability]\label{lem:replicability}
    \textsc{rMetaBoost} is $\rho$-replicable.
    \begin{proof}
        Let $S_1,S_2$ be two independent samples with distribution $\cD^m$ for some $m = \widetilde{O}\lp\frac{m_{\cW(\widetilde{\Theta}(\rho\gamma^2))}}{\epsilon\gamma^2} + \frac{1}{\rho^2\epsilon\gamma^3}\rp$. Assume that in iterations $1,\dots,t-1$, the algorithm has produced the same objects, i.e.\ that the reweighing functions and hypotheses associated with $S_1$ and $S_2$ are the same. Then, for iteration $t$ to be replicable, we need the following: 
        \begin{enumerate}[noitemsep,topsep=0pt]
            \item \textsc{rBoost}$^*$ outputs the same hypothesis for both samples.
            \item \textsc{rThreshold} outputs the same bit for both samples.
        \end{enumerate}
        When these conditions hold, the rest of the quantities appearing in the algorithm will be the same for both samples and hence ensure replicability. We call \textsc{rBoost}$^*$ with replicability parameter $\rho_0 = \rho/(6 T)$ and call \textsc{RejectionSampler} with at least $8 \log(6T/\rho)/\epsilon$ samples. Since \cref{lem: smoothness of measures} tells us that the density of $\mu_t$ satisfies $d(\mu_t) > \epsilon/32$, we can use \cref{lem:rejection_composition,lem:rejection} to conclude that \textsc{rBoost}$^*$ combined with \textsc{RejectionSampler} is $2\rho/(6T) + 2\rho/(6T) = 2\rho/(3T)$-replicable.
        Finally, by \cref{lem:rThreshold}, \textsc{rThreshold} is $\rho/(3T)$-replicable. Hence, by a union bound over the conditions, each iteration is $\rho/T$-replicable and union bounding over all $T$ iterations, the entire algorithm is $\rho$-replicable. 
    \end{proof}
\end{lemma}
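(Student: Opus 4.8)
The plan is to couple the two executions of \textsc{rMetaBoost} on independent samples $S_1 \sim \cD^m$ and $S_2 \sim \cD^m$ that share the same internal randomness $r$, and to show by induction on the iteration index $t$ that, except with total probability at most $\rho$, the two runs produce identical internal state $(N_t, M_t, \mu_t, c_t)$ and identical hypotheses $h_1, \dots, h_T$; since $H = \sign(\sum_t h_t)$ is a deterministic function of these, this gives $\rho$-replicability. All assignments in the body of the loop — the updates to $N_{t+1}, M_{t+1}, \mu_{t+1}$ and the threshold $c_{t+1}$ — are deterministic given the previous state together with the two randomized outputs $h_t$ (from \textsc{rBoost}$^*$) and $b_{t+1}$ (from \textsc{rThreshold}). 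So, conditioned on the two runs agreeing through iteration $t-1$, iteration $t$ agrees provided: (i) \textsc{rBoost}$^*$ run on rejection-sampled data from $\cD_{\mu_t}$ returns the same $h_t$, and (ii) \textsc{rThreshold} returns the same bit $b_{t+1}$. One partitions the shared randomness $r$ into disjoint blocks, one per subroutine invocation, so that each invocation sees identical randomness across the two runs.

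For step (i) the obstacle is that \textsc{rBoost}$^*$ is fed samples from the reweighted distribution $\cD_{\mu_t}$ only indirectly, via \textsc{RejectionSampler}, so there are two failure modes to control: disagreement of the booster itself, and the rejection sampler failing to produce a clean draw from $\cD_{\mu_t}$. The first is handled by invoking \textsc{rBoost}$^*$ with replicability parameter $\rho_0 = \rho/(6T)$ (\cref{thm:rBoost}). The second is exactly where \cref{lem: smoothness of measures} enters: it guarantees $d(\mu_t) \ge \epsilon/32$, so feeding the rejection sampler a fresh block $S_1$ of size $\widetilde{O}(m_{\textsc{rBoost}^*}(\rho_0,\epsilon_0)/\epsilon)$ and applying \cref{lem:rejection} with failure parameter $\rho/(6T)$ and density $\epsilon/32$ shows it fails with probability at most $\rho/(6T)$. \cref{lem:rejection_composition} then bundles the two modes together: \textsc{rBoost}$^*$ composed with \textsc{RejectionSampler} is $\big(2\cdot\frac{\rho}{6T} + 2\cdot\frac{\rho}{6T}\big) = \frac{2\rho}{3T}$-replicable.

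For step (ii) I apply \cref{lem:rThreshold} directly: with a fresh block $S_3$ of size $\widetilde{O}(1/(\rho^2\epsilon))$ — meeting the sample-size hypothesis $m \ge c\log(1/\delta)/(\rho_{\mathrm{thr}}^2 z)$ with $z = \epsilon/16$ and replicability parameter $\rho_{\mathrm{thr}} = \rho/(3T)$ — \textsc{rThreshold} is $\rho/(3T)$-replicable. A union bound over (i) and (ii) shows that, conditioned on agreement through iteration $t-1$, iteration $t$ disagrees with probability at most $\frac{2\rho}{3T} + \frac{\rho}{3T} = \rho/T$; a second union bound over the $T$ iterations yields total disagreement probability at most $\rho$, which is the claim.

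The delicate point, rather than any single estimate, is the bookkeeping of the layered dependencies. The input $\phi(x) = \ind\{N_{t+1}(x) = c_t+1\}$ passed to \textsc{rThreshold} in iteration $t$ is itself determined by $h_t$ through $N_{t+1}$, so the ``same input'' premise for step (ii) is only legitimate once step (i) has been resolved; the induction must therefore process $h_t$ before $b_{t+1}$. Likewise, the hypothesis of \cref{lem: smoothness of measures} — that \textsc{rThreshold} succeeds in every iteration — must be folded into the high-probability event we condition on rather than assumed for free, so one should either absorb its contribution into the per-iteration $\rho/T$ slack or defer it to the separate failure-probability analysis. Getting this ordering and conditioning right is the main obstacle; the numerical bounds on the block sizes are then routine.
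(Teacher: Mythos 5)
Your proposal is correct and follows essentially the same argument as the paper: the same inductive coupling, the same two per-iteration events, the same replicability budgets $\rho/(6T)$, $\rho/(6T)$, and $\rho/(3T)$, the same invocation of \cref{lem: smoothness of measures}, \cref{lem:rejection}, \cref{lem:rejection_composition}, and \cref{lem:rThreshold}, and the same two union bounds. The closing remark about processing $h_t$ before $b_{t+1}$ and absorbing the hypothesis of \cref{lem: smoothness of measures} into a conditioning event or the separate failure analysis is an accurate and slightly more careful articulation of bookkeeping the paper leaves implicit (it defers the density-failure event to the failure-probability lemma), but it does not alter the argument.
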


\begin{lemma}[Sample complexity]
    \textsc{rMetaBoost} uses $m = \widetilde{O}\lp\frac{m_{\cW(\widetilde{\Theta}(\rho\gamma^2))}}{\epsilon\gamma^2} + \frac{1}{\rho^2\epsilon\gamma^3}\rp$ samples.
\end{lemma}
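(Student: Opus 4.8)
The plan is essentially bookkeeping: add up all the fresh samples consumed across the $T=O(\log(1/\epsilon))$ iterations of the \texttt{for} loop. Inspecting \cref{alg:NewRBoost}, fresh samples are drawn from $S$ in exactly two places per iteration: line~\ref{alg:NewRBoost:rejection_sample}, where $S_1$ has size $\widetilde{O}(m_{\textsc{rBoost}^*}(\rho_0,\epsilon_0)/\epsilon)$, and line~\ref{alg:NewRBoost:threshold_sample}, where $S_3$ has size $\widetilde{O}(1/(\rho^2\epsilon))$. The set $S_2$ handed to \textsc{rBoost}$^*$ is produced by \textsc{RejectionSampler} out of $S_1$ and therefore consumes no new draws. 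Hence the total number of samples is $T\cdot\big(\widetilde{O}(m_{\textsc{rBoost}^*}(\rho_0,\epsilon_0)/\epsilon)+\widetilde{O}(1/(\rho^2\epsilon))\big)$, and since $T$ is logarithmic in $1/\epsilon$, the factor $T$ is absorbed into the $\widetilde{O}$.

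Next I would unfold $m_{\textsc{rBoost}^*}(\rho_0,\epsilon_0)$ using \cref{thm:rBoost}. Here $\epsilon_0=1/16$ is an absolute constant, and $\rho_0=\rho/(6T)=\widetilde{\Theta}(\rho)$ since $T=O(\log(1/\epsilon))$; in particular $1/\rho_0^2=\widetilde{O}(1/\rho^2)$ and $\rho_0\epsilon_0\gamma^2=\widetilde{\Theta}(\rho\gamma^2)$. Substituting into $m_{\textsc{rBoost}^*}(\rho,\epsilon)=\widetilde{O}(m_{\cW(\Theta(\rho\epsilon\gamma^2))}/(\epsilon^2\gamma^2)+1/(\rho^2\epsilon^4\gamma^3))$, and using that the $\widetilde{\Theta}$ inside $m_{\cW}(\cdot)$ already absorbs the polylogarithmic discrepancy in its argument, this gives $m_{\textsc{rBoost}^*}(\rho_0,\epsilon_0)=\widetilde{O}(m_{\cW(\widetilde{\Theta}(\rho\gamma^2))}/\gamma^2+1/(\rho^2\gamma^3))$. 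Dividing by $\epsilon$, the per-iteration contribution of $S_1$ is $\widetilde{O}(m_{\cW(\widetilde{\Theta}(\rho\gamma^2))}/(\epsilon\gamma^2)+1/(\rho^2\epsilon\gamma^3))$, while the $S_3$ term $\widetilde{O}(1/(\rho^2\epsilon))$ is dominated by $\widetilde{O}(1/(\rho^2\epsilon\gamma^3))$ because $\gamma\in(0,1/2)$. Multiplying by $T$ leaves this unchanged under $\widetilde{O}$, yielding the claimed bound $m=\widetilde{O}(m_{\cW(\widetilde{\Theta}(\rho\gamma^2))}/(\epsilon\gamma^2)+1/(\rho^2\epsilon\gamma^3))$.

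For completeness I would also check that the stated sizes of $S_1$ and $S_3$ are precisely what the subroutines require. By \cref{lem: smoothness of measures}, $d(\mu_t)\ge\epsilon/32$ in every iteration, so \cref{lem:rejection} says \textsc{RejectionSampler} needs $\widetilde{O}(m_{\textsc{rBoost}^*}(\rho_0,\epsilon_0)/\epsilon)$ input samples in order to output the $m_{\textsc{rBoost}^*}(\rho_0,\epsilon_0)$ samples demanded by \textsc{rBoost}$^*$, matching $S_1$; and \cref{lem:rThreshold} with threshold $z=\epsilon/16$ and replicability parameter $\rho/(3T)$ needs $\widetilde{O}(1/(\rho^2\epsilon))$ samples, matching $S_3$. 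The failure probabilities of these subroutines enter only through logarithmic factors and are therefore harmless. There is no substantive obstacle in this proof; the only points needing a moment's care are confirming that replacing $\rho$ by $\rho_0=\rho/(6T)$ (and the \textsc{rThreshold} replicability by $\rho/(3T)$) costs only polylogarithmic factors — immediate from $T=O(\log(1/\epsilon))$ — and that the $\widetilde{\Theta}$ notation in \cref{thm:rBoost} is what legitimizes writing $m_{\cW(\Theta(\rho_0\epsilon_0\gamma^2))}=m_{\cW(\widetilde{\Theta}(\rho\gamma^2))}$.
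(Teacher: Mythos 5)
Your proposal is correct and follows essentially the same route as the paper's own proof: per-iteration bookkeeping of the fresh samples drawn for the \textsc{RejectionSampler} input and for \textsc{rThreshold}, unfolding $m_{\textsc{rBoost}^*}(\rho_0,\epsilon_0)$ via \cref{thm:rBoost} with $\epsilon_0$ constant and $\rho_0=\rho/(6T)=\widetilde{\Theta}(\rho)$, and multiplying by $T=O(\log(1/\epsilon))$, all absorbed in $\widetilde{O}$. You also correctly note that the \textsc{rThreshold} term is dominated by the $1/(\rho^2\epsilon\gamma^3)$ term since $\gamma<1/2$, which the paper leaves implicit.
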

\begin{proof}
    For the sample complexity of a single iteration, we simply add up the sample complexities of all the subroutines:
\begin{itemize}[topsep=0pt]
    \item \textsc{rBoost}$^*$:
    Since we give it parameters $\rho_0=\rho/(6T)$,$\epsilon_0=1/16$, we get from \cref{thm:rBoost} that the sample complexity of \textsc{rBoost}$^*$ in a single iteration is 
    \[
        m_{\textsc{rBoost}^*}(\rho_0, \epsilon_0)= O\lp \frac{\log(\frac{T}{\rho\gamma^2})m_{\cW(\Theta(\frac{\rho\gamma^2}{T}))}}{\gamma^2} + \frac{\log(\frac{T}{\rho\gamma})T^2}{\rho^2\gamma^3}\rp
    \]
    Remark that the choice of constant $\epsilon_0$ removes all the dependence on $\epsilon$.

    \item \textsc{RejectionSampler}:
    To invoke \cref{lem:rejection} with failure probability $\rho/(6T)$ the number of samples used in each iteration is
    \[O\lp\frac{\log(\frac{T}{\rho})m_{\textsc{rBoost}^*}(\rho_0, \epsilon_0)}{\epsilon}\rp.\]
    \item \textsc{rThreshold}:
    To invoke \cref{lem:rThreshold} with replicability parameter $\rho/(3T)$ and failure probability $\rho/(24T)$ the number of samples used in each iteration is
    \[O(\frac{\log(\frac{T}{\rho})T^2}{\rho^2\epsilon}).\]
\end{itemize}
Remembering that the number of iterations is $T = O(\log(1/\epsilon))$ we get the total sample complexity to be
\begin{align*}
    &O\Bigg( T\Bigg( \frac{\log(\frac{T}{\rho})\log(\frac{T}{\rho\gamma^2})m_{\cW(\Theta(\frac{\rho\gamma^2}{T}))}}{\epsilon\gamma^2}+ \frac{\log(\frac{T}{\rho})\log(\frac{T}{\rho\gamma})T^2}{\rho^2\epsilon\gamma^3}
    + \frac{\log(\frac{T}{\rho})T^2}{\rho^2\epsilon}\Bigg)\Bigg)\\
    &= \widetilde{O}\lp \frac{m_{\cW(\widetilde{\Theta}(\rho\gamma^2))}}{\epsilon\gamma^2} + \frac{1}{\rho^2\epsilon\gamma^3} \rp \qedhere
\end{align*}
\end{proof}

\begin{lemma}[Failure probability]
    \textsc{rMetaBoost} fails with probability at most $9\rho/24 \le \rho$.
\end{lemma}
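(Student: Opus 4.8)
The plan is to obtain the bound by a single union bound over every invocation of the three randomized subroutines across all $T$ iterations, and then to invoke the correctness guarantee on the event that none of them fails. Call an execution of \textsc{rMetaBoost} \emph{good} if, in every iteration $t\in[T]$: (i) the call to \textsc{rThreshold} returns the bit dictated by \cref{lem:rThreshold}; (ii) the call to \textsc{RejectionSampler} returns a genuine sample $S_2\sim\cD_{\mu_t}^{m_{\textsc{rBoost}^*}(\rho_0,\epsilon_0)}$; and (iii) the call to \textsc{rBoost}$^*$ returns an $h_t$ with $\er_{\cD_{\mu_t}}(h_t)\le\epsilon_0$. On a good execution every subroutine succeeds in every iteration, so the Correctness lemma (applicable since we take $T\ge 8\log(2/\epsilon)$ and $\epsilon_0=1/16$) yields $\er_\cD(H)\le\epsilon$. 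Hence it suffices to show that an execution is good except with probability at most $9\rho/24$.

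First I would handle (i). By \cref{lem:rThreshold} with the chosen size of $S_3$ and with replicability parameter $\rho/(3T)$ and failure parameter $\delta=\rho/(24T)\le(\rho/(3T))/8$, each of the $T$ calls to \textsc{rThreshold} returns the wrong bit with probability at most $\rho/(24T)$; a union bound shows (i) holds in all iterations except with probability at most $\rho/24$. Condition on this event $\mathcal G$. Then \cref{lem: smoothness of measures} applies, giving $d(\mu_t)\ge\epsilon/32$ for every $t\in[T]$ on $\mathcal G$; this is precisely the density precondition of \cref{lem:rejection}. Since $S_1$ consists of $\widetilde O(m_{\textsc{rBoost}^*}(\rho_0,\epsilon_0)/\epsilon)$ fresh samples, \cref{lem:rejection} gives that each of the $T$ calls to \textsc{RejectionSampler} fails with probability at most $\rho/(6T)$ conditioned on $\mathcal G$ and the history so far, so a union bound shows (ii) holds in all iterations except with conditional probability at most $\rho/6$.

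Finally, condition further on the event $\mathcal R$ that (ii) holds everywhere. Then in iteration $t$ the set $S_2$ handed to \textsc{rBoost}$^*$ is an i.i.d.\ sample of the required size $m_{\textsc{rBoost}^*}(\rho_0,\epsilon_0)$ from $\cD_{\mu_t}$, so \cref{thm:rBoost} — applied with $\cD_{\mu_t}$ playing the role of the target distribution and with parameters $\rho_0=\rho/(6T)$, $\epsilon_0=1/16$ — guarantees that the $t$-th call produces an $h_t$ with $\er_{\cD_{\mu_t}}(h_t)\le\epsilon_0$ except with probability at most $\rho/(6T)$, using its own fresh internal randomness independent of the history; a union bound shows (iii) holds everywhere except with conditional probability at most $\rho/6$. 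Combining the three estimates, the probability that an execution is not good is at most $\rho/24+\rho/6+\rho/6=9\rho/24\le\rho$, which is the claim. The only point requiring care is the order of conditioning: the \textsc{RejectionSampler} failure bound needs the density lower bound of \cref{lem: smoothness of measures}, which in turn needs \textsc{rThreshold} to have succeeded, so the \textsc{rThreshold} event must be peeled off first; the fact that $S_1$, $S_2$, $S_3$ are drawn as disjoint fresh chunks of $S$ across iterations is what makes each per-call failure bound valid conditionally on the preceding history.
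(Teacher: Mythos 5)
Your proof is correct and takes essentially the same route as the paper: the paper's proof is a two-sentence union bound that simply sums the per-iteration failure probabilities $\rho/(6T)$ (\textsc{RejectionSampler}), $\rho/(24T)$ (\textsc{rThreshold}) and $\rho/(6T)$ (\textsc{rBoost}$^*$) over $T$ iterations to get $9\rho/24$, implicitly taking for granted that the density precondition of \cref{lem:rejection} is available when \textsc{RejectionSampler} is invoked.

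You make explicit the dependency chain the paper leaves implicit — \textsc{rThreshold} success $\Rightarrow$ \cref{lem: smoothness of measures} $\Rightarrow$ density $\ge\epsilon/32$ $\Rightarrow$ \cref{lem:rejection} applicable — which is the right thing to spell out. One small technical caveat: peeling off the event $\mathcal G$ that \emph{all} $T$ \textsc{rThreshold} calls succeed and then bounding $\p[\textsc{RejectionSampler fails}\mid\mathcal G]$ conditions on \emph{future} \textsc{rThreshold} outcomes, which in principle could bias the rejection-sampler failure probability at iteration $t$. The clean version of your argument decomposes by first failure time: for each $t$, the probability that iteration $t$ is the first iteration at which any subroutine fails is at most $\rho/(24T)+\rho/(6T)+\rho/(6T)$, because conditioned on no failure in iterations $1,\dots,t-1$ the density bound $d(\mu_t)\ge\epsilon/32$ holds deterministically (its inductive proof only uses \textsc{rThreshold} successes up to iteration $t-1$) and each subroutine at iteration $t$ uses fresh samples. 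Summing over $t$ gives $9\rho/24$. This is a minor repair; your accounting, the constants, and the identification of the three failure sources all match the paper.
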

\begin{proof}
    The only sources of failure are the three subroutines. \textsc{RejectionSampler} fails with probability $\rho/(6T)$ in each iteration. \textsc{rThreshold} fails with probability $\rho/(24T)$ in each iteration. \textsc{rBoost}$^*$ fails with probability at most $\rho/(6T)$ in each iteration. Hence, the total failure probability of the algorithm is at most $9\rho/24 $.
\end{proof}

\section{Subroutines}\label{sec:subroutines}
In this section, we will present the replicable subroutines that the boosting algorithm uses. This includes \textsc{rThreshold} and \textsc{rBoost}$^*$. As mentioned earlier, we will not present \textsc{RejectionSampler} as we have made no changes to it, so we refer to \citet{impagliazzo2022reproducibility} for the description of this subroutine. We now move on to describe the two other subroutines.

\subsection{\textsc{rThreshold}}
In this section, we will describe \textsc{rThreshold} in more detail.
The pseudocode can be found in \cref{alg:rThreshold}. The purpose of this algorithm is to make a replicable test to see if $\E[\phi(X)] > z$ for some threshold $z\in (0,1)$ and $\phi:\cX \to [0,1]$. In the original version of \textsc{rBoost}, this is done by replicably simulating a statistical query for estimating $\E[\phi(X)]$, with an additive error of order $z$. However, for a threshold check it suffices to have a multiplicative error when estimating $\E[\phi(X)]$, which means we can do a better analysis by using a Chernoff bound.

\begin{algorithm}
    \caption{\textsc{rThreshold}$(S, z, \phi)$}\label{alg:rThreshold}
    \begin{algorithmic}[1]
        \REQUIRE Samples $S=(x_1, \ldots, x_m)$ drawn from $\cD$, threshold $z$, function $\phi: \cX \rightarrow [0,1]$.
        \ENSURE Bit $b$ being a guess, whether $\E_{x\sim \cD}[\phi(x)] > z$.
        \STATE $z_0 \gets_{r} [\frac{3}{4}z, \frac{3}{2}z]$\hfill$\triangleright$ Chosen uniformly at random
        \STATE $\overline{\phi(S)} \gets \frac{1}{m}\sum_{i=1}^m\phi(x_i)$
        \STATE \textbf{Return:} $b = \ind\left\{\overline{\phi(S)} > z_0\right\}$
    \end{algorithmic}
\end{algorithm}

We will now prove the guarantee of \textsc{rThreshold}. For convenience, we restate the guarantee here.

\begin{reptheorem}{lem:rThreshold}
\rThreshold
\end{reptheorem}
\begin{proof}
    It is sufficient to set $m \geq \frac{700\log(1/\delta)}{z\rho^2}$.
    We will first prove the first bullet point. Hence, assume that  $\E_{x\sim \cD}[\phi(x)] \leq z/2$. We then bound the following probability:
    \begin{align*}
        \p\left[b = 1\right]
        & = \p\left[\overline{\phi(S)} > z_0\right] \leq \p\left[\overline{\phi(S)} > \frac{3}{4}z\right]\\
        & = \p\left[\sum_{i=1}^m\phi(x_i) > (1 + \frac{1}{2})m(z/2)\right].
    \end{align*}
    Since the assumption states that $\E_{x\sim \cD}[\phi(x)] \leq z/2$, we can use a Chernoff bound to bound the above probability by
    \begin{align*}
        \exp\lp -mz/24\rp
        \leq \exp(-\log(1/\delta)/\rho^2)
        = \delta^{1/\rho^2} \leq \delta.
    \end{align*}
    We move on to the second bullet point. Hence, we now assume $\E_{x\sim \cD}[\phi(x)] \geq 2z$ and bound the following probability:
    \begin{align*}
        \p\left[b = 0\right]
        & = \p\left[\overline{\phi(S)} \leq z_0\right] \leq \p\left[\overline{\phi(S)} \leq \frac{3}{2}z\right]\\ 
        & = \p\left[\sum_{i=1}^m\phi(x_i) \leq (1 - \frac{1}{4})m(2z)\right].
    \end{align*}
    Again, since $\E_{x\sim \cD}[\phi(x)] \geq 2z$, we can use a Chernoff bound to bound the above probability by
    \begin{align*}
        \exp\lp -mz/16\rp
        \leq \exp(-\log(1/\delta)/\rho^2)
        = \delta^{1/\rho^2} \leq \delta.
    \end{align*}
    We will now show that \textsc{rThreshold} is $\rho$-replicable by considering two different runs of the algorithm with common randomness. Let $S_1, S_2 \sim \cD^{m}$ be the two sequences of samples used in the two runs. Assuming that neither of the runs fail, they will always output the same bit if $\E_{x\sim \cD}[\phi(x)] \notin [z/2, 2z]$, so we can safely assume that this is not the case. Now, we will bound the probability that $\overline{\phi(S_i)}$ deviates too much from $\E_{x\sim \cD}[\phi(x)]$. So we bound the following probabilities using Chernoff bounds and the assumption that $\E_{x\sim \cD}[\phi(x)] \in [z/2, 2z]$:
    \begin{align*}
        &\p\left[\overline{\phi(S_i)} - \E_{x\sim \cD}[\phi(x)]\geq 3z\rho/16\right] \\
        & \leq \exp(-3z\rho^2m/2048) \leq \delta \leq \rho/8
    \shortintertext{and}
        &\p\left[\overline{\phi(S_i)} - \E_{x\sim \cD}[\phi(x)] \leq -3z\rho/16 \right] \\
        & \leq \exp(-9z\rho^2m/4096) \leq \delta \leq \rho/8.
    \end{align*}
    Using a union bound, we can conclude that
    \[\p\left[ \left\lvert \overline{\phi(S_i)} - \E_{x\sim \cD}[\phi(x)]\right\rvert \geq 3z\rho / 16 \right] \leq \rho/4.\]
    We can therefore conclude that with high probability, the two estimates will be close to each other. That is,
    \[\p\left[ \left\lvert \overline{\phi(S_1)} - \overline{\phi(S_2)} \right\rvert \geq 3z\rho/8 \right] \leq \rho/2. \]
    So assume for now, that the two estimates are within $3z\rho/8$ of each other. Then the two runs will only give different outputs if the random split $z_0$ is chosen between them. The probability of this happening is at most the distance between the estimates divided by the total range of $z_0$, which is at most
    \[ \frac{3z\rho/8}{3z/2 - 3z/4} = \rho/2.\]
    So the probability that the two runs output different bits is at most $\rho/2 + \rho/2 = \rho$. Therefore, the algorithm is $\rho$-replicable.
\end{proof}

\subsection{\textsc{rBoost}$^*$}

We now move on to discuss in more detail why the modifications in \textsc{rBoost}$^*$ preserve correctness. The modified version can be seen in \cref{alg:rBoost}. The only two modifications can be found in line \ref{alg:rBoost:divides} and \ref{alg:rBoost:threshold}.
In line \ref{alg:rBoost:threshold} we have substituted a statistical query with our threshold check, and in line \ref{alg:rBoost:divides} we have inserted an if-statement to only do the threshold check every $1/\gamma$ iteration. In this algorithm, the threshold check gives the same guarantees as the statistical query, and it will therefore not affect correctness. However, the introduction of the if-statement could lead to two kinds of errors, since we do not check the value of $d(\mu_t)$ in every iteration. First, it could be that \textsc{RejectionSampler} fails, since it needs $d(\mu_t)$ to be large. Second, it could be that the number of iterations is increased, since the algorithm does not detect immediately when the density becomes small. To show that these events are not problematic, we first show that the densities do not decrease too much over $1/\gamma$ iterations.

\begin{lemma}\label{lem: scaled measures}
    Let $T_0$ denote the number of iterations that \textsc{rBoost}$^*$ runs for. Then for all $t \in [T_0]$ and $k\le \max\{\lfloor{1/\gamma}\rfloor,T_0-t\}$ that $d(\mu_{t+k})\ge d(\mu_t)/2$.
\end{lemma}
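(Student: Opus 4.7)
The plan is to first derive a pointwise lower bound $\mu_{t+k}(x) \ge (1-\gamma)^{k/2}\mu_t(x)$ for every $x\in\cX$, and then take expectations over $\cD$ to obtain the density bound. Throughout I will use the uniform representation
\[
\mu_s(x) = \min\!\lp 1,\,(1-\gamma)^{g_{s-1}(x)/2}\rp,
\]
which agrees with the piecewise definition in \cref{alg:rBoost} because $(1-\gamma)^{a/2}\ge 1$ precisely when $a\le 0$.

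For the pointwise bound, I would use the update rule $g_s(x)=g_{s-1}(x)+h_s(x)f(x)-\gamma/(2+\gamma)$ together with $h_s(x)f(x)\le 1$ to get the one-sided step estimate
\[
g_{t+k-1}(x)-g_{t-1}(x) \le k\lp 1-\tfrac{\gamma}{2+\gamma}\rp \le k.
\]
Letting $a=g_{t-1}(x)$, $b=g_{t+k-1}(x)$, I would split into four cases according to the signs of $a$ and $b$. In the case $a,b>0$ the ratio is $(1-\gamma)^{(b-a)/2}\ge (1-\gamma)^{k/2}$. In the case $a\le 0\le b$ the ratio equals $(1-\gamma)^{b/2}\ge (1-\gamma)^{k/2}$ because $b\le a+k\le k$. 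In the remaining two cases ($b\le 0$) we have $\mu_{t+k}(x)=1\ge\mu_t(x)$, so the ratio is at least $1\ge(1-\gamma)^{k/2}$. Taking expectation yields
\[
d(\mu_{t+k}) \ge (1-\gamma)^{k/2}\,d(\mu_t).
\]

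Finally, it remains to verify the numerical inequality $(1-\gamma)^{k/2}\ge 1/2$ for $k\le \lfloor 1/\gamma\rfloor$, which reduces to $(1-\gamma)^{1/(2\gamma)}\ge 1/2$ for $\gamma\in(0,1/2]$. I would argue that the map $\gamma\mapsto \log(1-\gamma)/(2\gamma)$ is decreasing on $(0,1/2]$ via a standard derivative computation (showing that $-\gamma/(1-\gamma)-\log(1-\gamma)\le 0$), so the minimum of $(1-\gamma)^{1/(2\gamma)}$ on $(0,1/2]$ is attained at $\gamma=1/2$, where its value is $(1/2)^1=1/2$. This gives $d(\mu_{t+k})\ge d(\mu_t)/2$. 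The edge case where $t$ is near $T_0$ (so the admissible range is $k\le T_0-t$) is handled by the very same pointwise argument, since it was derived for arbitrary $k$.

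The main obstacle is not deep: the slightly delicate part is the case analysis ensuring the pointwise ratio bound crosses cleanly between the two pieces of the $\min$, and the one-variable calculus fact that $(1-\gamma)^{1/(2\gamma)}\ge 1/2$ throughout $(0,1/2]$. Both are routine once set up, and they together drive the full bound.
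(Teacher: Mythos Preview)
Your proposal is correct and follows essentially the same route as the paper: derive the pointwise inequality $\mu_{t+k}(x)\ge(1-\gamma)^{k/2}\mu_t(x)$ and take expectations. The only difference is that the paper gets the pointwise bound by the one-step estimate $\mu_{t+1}(x)\ge(1-\gamma)^{1/2}\mu_t(x)$ iterated $k$ times (avoiding your explicit case split), and then handles the numerical inequality in one line via Bernoulli, $(1-\gamma)^{\lfloor 1/\gamma\rfloor/2}\ge 1-\tfrac{\gamma}{2}\lfloor 1/\gamma\rfloor\ge\tfrac12$, rather than your calculus/monotonicity argument.
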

\begin{proof}
    Let $x\in \cX$. Then by the recursive definition of the $g_t$'s, we have $\mu_{t+1}(x) \ge (1-\gamma)^{1/2}\mu_t(x)$. Inductively, we get 
    \begin{align*}
        \mu_{t+k}(x) & \ge \mu_{t}(x)(1-\gamma)^{k / 2}\ge  \mu_{t}(1-\gamma)^{\lfloor \frac{1}{\gamma}\rfloor / 2} \\
        & \ge  \mu_{t}(x)\lp 1-\gamma \lfloor \frac{1}{\gamma}\rfloor/ 2 \rp \ge \frac{1}{2}\mu_{t}(x),
    \end{align*}
    where we use Bernoulli's inequality which applies since $-\gamma>-1$. Taking the expectation with respect to $\cD$ on both sides yields the desired conclusion. 
\end{proof}

\begin{reptheorem}{thm:rBoost}
    \rBoost
\end{reptheorem}

\begin{proof}
First, we argue that the \textsc{RejectionSampler} succeeds with high probability. Observe that due to \cref{lem: scaled measures}, the densities are at most halved in \textsc{rBoost}$^*$ compared to the original version of \textsc{rBoost}. Due to \cref{lem:rejection} we therefore only need to use twice as many samples in the rejection sampler for it to still succeed.

Next, we will argue that the number of iterations remains the same as in \textsc{rBoost} up to constant factors. The number of iterations is bounded in \citet{Smoothboost} by showing that for any $\kappa > 0$, there is some $t$ within the first $O(\frac{1}{\kappa\gamma^2})$ iterations such that $d(\mu_t) < \kappa$. This result also applies to \textsc{rBoost}$^*$. However, we will not repeat the proof here. 

We can then conclude that $d(\mu_t)$ will fall below $\epsilon/8$ within the first $O(\frac{1}{\epsilon\gamma^2})$ iterations. Since the threshold check in line \ref{alg:rBoost:threshold} always realizes when $d(\mu_t) \leq \epsilon/4$ (see \cref{lem:rThreshold}), and it takes $1/\gamma$ iterations to further decrease the density from $\epsilon/4$ to $\epsilon/8$, \textsc{rThreshold} will always have terminated the loop before reaching density $\epsilon/8$. Hence, the number of iterations in \textsc{rBoost}$^*$ is still $T_0 = O(\frac{1}{\epsilon\gamma^2})$.

We now argue, that replicability is preserved. Since this argument is almost identical to the proof of \cref{lem:replicability}, we will only describe what differs in this analysis. First, the weak learner is  called $T_0$ times, and therefore it needs replicability parameter $\rho/(6T_0) = \Theta(\rho\epsilon\gamma^2)$. Second, \textsc{rThreshold} is called $\gamma T_0$ times and therefore needs replicability parameter $\rho/(6\gamma T_0) = \rho\epsilon\gamma/6$, which it achieves due to \cref{lem:rThreshold}. Thus, our modifications preserve replicability. 

Finally, we calculate the sample complexity.
The \textsc{RejectionSampler} uses $O(\log(\frac{T_0}{\rho})m_{\cW(\Theta(\rho\epsilon\gamma^2))}/\epsilon)$ samples for each call, and it is called $T_0$ times.
Meanwhile, \textsc{rThreshold} uses $O(\frac{\log(\gamma T_0/\rho)}{\rho^2\epsilon^3\gamma^2})$ samples for each call, and is called $\gamma T_0$ times. Hence, the total sample complexity is
\begin{align*}
&O\lp T_0\frac{\log(\frac{T_0}{\rho})m_{\cW(\Theta(\rho\epsilon\gamma^2))}}{\epsilon} + \gamma T_0\frac{\log(\frac{\gamma T_0}{\rho})}{\rho^2\epsilon^3\gamma^2}\rp\\
&=O\lp \frac{\log(\frac{1}{\rho\epsilon\gamma^2})m_{\cW(\Theta(\rho\epsilon\gamma^2))}}{\epsilon^2\gamma^2} + \frac{\log(\frac{1}{\rho\epsilon\gamma})}{\rho^2\epsilon^4\gamma^3}\rp\\
&=\widetilde{O}\lp\frac{m_{\cW(\Theta(\rho\epsilon\gamma^2))}}{\epsilon^2\gamma^2} + \frac{1}{\rho^2\epsilon^4\gamma^3}\rp. \qedhere
\end{align*}
\end{proof}

\section*{Acknowledgements}{All authors are supported by the European Union (ERC, TUCLA, 101125203). Views and opinions expressed are however those of the author(s) only and do not necessarily reflect those of the European Union or the European Research Council. Neither the European Union nor the granting.}

\bibliography{refs.bib}
\bibliographystyle{abbrvnat}

\end{document}